\documentclass{article}

\usepackage{arxiv}

\usepackage[utf8]{inputenc}
\usepackage[T1]{fontenc}
\usepackage{microtype}

\usepackage{amsmath}
\usepackage{amsfonts}
\usepackage{amssymb}
\usepackage{mathtools}
\usepackage{bm}
\usepackage{nicefrac}

\usepackage{booktabs}
\usepackage{array}
\usepackage{graphicx}
\usepackage{float}
\usepackage{multirow}
\usepackage{siunitx}
\usepackage{subcaption}
\usepackage{wrapfig}
\usepackage{arydshln}

\usepackage{algorithm}
\usepackage{algorithmicx}
\usepackage{algpseudocode}

\renewcommand{\algorithmicrequire}{\textbf{Input:}}
\renewcommand{\algorithmicensure}{\textbf{Output:}}
\algrenewcommand\algorithmicrequire{\textbf{Input:}}
\algrenewcommand\algorithmicensure{\textbf{Output:}}

\usepackage{enumitem}
\usepackage{xcolor}
\usepackage{soul}
\usepackage{tcolorbox}

\usepackage{url}
\usepackage{natbib}
\usepackage{hyperref}
\usepackage{doi}

\sethlcolor{purple!20}
\newcommand{\correcthuman}[1]{{\sethlcolor{purple!20}\hl{#1}}}
\newcommand{\humantollm}[1]{{\sethlcolor{orange!25}\hl{#1}}}
\newcommand{\correctllm}[1]{{\sethlcolor{cyan!25}\hl{#1}}}

\usepackage{amsthm}

\makeatletter
\@ifundefined{c@theorem}{
    \newtheorem{theorem}{Theorem}[section]
}{
    \numberwithin{theorem}{section}
}
\makeatother

\newtheorem{assumption}[theorem]{Assumption}
\newtheorem{corollary}[theorem]{Corollary}

\theoremstyle{definition}

\theoremstyle{remark}

\title{Detecting LLM-Generated Tokens in Human--LLM Coauthored Text}

\date{July 23, 2026}	%

\author{
Yangjun Lu\textsuperscript{1},
Hongyi Zhou\textsuperscript{2},
Fabian Spill\textsuperscript{1},
Kai Ye\textsuperscript{3},
Chengchun Shi\textsuperscript{3},
Jin Zhu\textsuperscript{1} \\
\\
\textsuperscript{1}School of Mathematics, University of Birmingham, Birmingham, UK \\
\textsuperscript{2}School of Statistics and Data Science, Shanghai University of Finance and Economics, Shanghai, China \\
\textsuperscript{3}Department of Statistics, The London School of Economics and Political Science, London, UK
}

\hypersetup{
pdftitle={A template for the arxiv style},
pdfsubject={q-bio.NC, q-bio.QM},
pdfauthor={David S.~Hippocampus, Elias D.~Striatum},
pdfkeywords={First keyword, Second keyword, More},
}

\begin{document}
\maketitle
\begin{abstract}
	The rise of human-AI collaborative writing has created a growing need for fine-grained detection methods that support localizing likely LLM-generated content in mixed-authorship documents. Existing methods for detecting LLM-generated text mainly focus on document-level classification and cannot identify which parts of the text are generated by LLMs. This paper introduces a new method to address this urgent need. Our method operates at the token level, the natural unit of modern language models, and builds on existing token-level detection scores. The key idea is to smooth adjacent token scores to reduce their variability, while using an adaptive Lepski-type rule to select the bandwidth according to the local authorship structure. Our method is simple to implement and does not require token-level labeled data for training. Theoretically, we characterize this trade-off and show that the proposed method achieves favorable mean square error performance in estimating the underlying signal. Empirically, we demonstrate strong performance of our method against a wide range of baselines in both synthetic datasets and a realistic dataset. We deploy a publicly accessible website that implements the methods \url{https://huggingface.co/spaces/Jasmineame/detect-LLM-tokens}.
\end{abstract}

\keywords{Large language model (LLM) \and Human-LLM coauthored text \and LLM-generated tokens detection \and Kernel smoothing \and Lepski's method}

\section{Introduction}
Large language model (LLM)-generated text is increasingly used in academic writing, education, news production, and creative writing \citep{milano2023large,demszky2023using,anil2024generativeAI,xie2023next}. In practice, a final document is often neither entirely human-written nor entirely LLM-generated, but typically contains interleaved human-written and LLM-generated segments. For example, an author may draft the main argument while using an LLM to rewrite selected passages, generate descriptions of figures, or continue a story that is later edited by a human. Such human–LLM coauthoring has become increasingly common as LLMs continue to evolve \citep{lee2022coauthor,zhang2024llm}.

In such settings, document-level detection is too coarse: a document-level detector may indicate that a text contains LLM-generated content, but it cannot identify where the LLM contributed or how its contribution is distributed throughout the document. This limitation matters in scientific publishing and editorial review \citep{thorp2023chatgpt,nature2023groundrules}, as well as in legal, policy, and compliance documents where AI usage is worthy to be disclosed \citep{europeanparliament2024aiact}. Localizing AI-generated text can also help narrow the scope of hallucination checking \citep{farquhar2024detecting}. Motivated by these, we study a finer-grained problem: identifying the authorship of individual tokens in a human--LLM coauthored document. Tokens are a natural unit for this task because they are the basic input-output units of modern language models and can be mapped back to localized segments of text. Compared with sentence- or paragraph-level detection \citep{zhang2024machine,su2025haco,li2026segmenting}, token-level detection is better suited to cases where human edits and LLM-generated fragments are interleaved within a sentence, across sentence boundaries, or inside a paragraph. This formulation provides an interpretable way to localize LLM-authored content in general human--LLM collaboration settings \citep{zeng2024aisentence}.

\subsection{Related works}\label{sec:review}

\textbf{LLM-generated document detection}. Existing work on LLM-generated text detection has primarily studied the binary classification problem of distinguishing human-authored document from LLM-authored document. Broadly speaking, these methods fall into four categories. The first category consists of statistics-based detectors, which construct detection scores from token log-probabilities, ranks, likelihood curvature or other related quantities, and then aggregate these quantities over the whole document \citep{gehrmann2019gltr, su2023detectllm, bao2024fastdetectgpt, hans2024spotting, xu2025training}. The second category consists of machine learning (ML)-based detectors. Some methods extract certain feature statistics from text and train classical classifiers such as logistic regression \citep{tulchinskii2023intrinsic, guo2024biscope, mao2024raidar,verma2024ghostbuster,yu2024dpic}; some methods learn adaptive token-level scoring functions before aggregation \citep{chen2025imitate, zhou2025adadetectgpt}; others fine-tunes pretrained language models, such as BERT- or RoBERTa-based models, for text classification \citep{solaiman2019release, ippolito2020bert, hu2023radar, tian2024multiscale,liu2019roberta}. The third category consists of reconstruction- or perturbation-based detectors, which compare the original text with text reconstructed, perturbed, or otherwise transformed by a language model \citep{mitchell2023detectgpt, yang2024dnagpt, zhou2026l2d}. The fourth category is the watermarking-based methods \citep{kirchenbauer2023watermark, zhao2024provable, li2025statistical,xie2025debiasing}. These methods embed detectable statistical signals into the output text, and detect LLM- or human-authored text based on detecting the existence of these signals. 

Although document-level detectors can adapt token-level detection by treating each token as a document, such a crude procedure loses information from the whole text because each token is analyzed independently. An alternative strategy is to apply a document-level detector locally: for each target token, one can evaluate the detector on a short neighborhood around that token and use the resulting score for determining the token's authorship. This local adaptation is attractive because it reuses existing detectors without requiring additional supervised training, but it does not by itself resolve how neighboring tokens should be weighted or how the local bandwidth should be chosen. These issues motivate our proposal developed in this paper.

\textbf{LLM localization under human--AI coauthoring}. In practice, a more meaningful objective is to localize LLM-generated text rather than only to classify the entire document. Existing localization methods can be broadly divided into ML- and segmentation-based approaches. ML-based methods collect mixed-authorship datasets with sentence- or token-level labels and train RoBERTa- or transformer-based models to predict authorship labels \citep{bai2023segformer, wang2023seqxgpt, kushnareva2024boundary, zeng2024towards, zeng2024aisentence, zhang2024machine, li2024spotting, jiang2025sendetex, su2025haco, xu2026unveiling}. However, ML-based methods require labeled data at the sentence- or token-level, which is costly to collect. Segmentation-based methods, by contrast, avoid additional ML model training, by partitioning text into predefined units, most commonly sentences or paragraphs. They then identify LLM-generated regions by selecting transition points between these units. The latter task is achieved by solving a set optimization or change-point detection problem, borrowing techniques from operations research and statistics \citep[see, e.g.][]{lei2025pald, li2026segmenting}. Several human--LLM coauthored datasets have also been introduced to support the evaluation of these localization methods \citep{lee2022coauthor, dugan2023bounary, wang2024semeval}. For segmentation-based methods, their localization precision is inherently limited to the level of the predefined units. These methods implicitly assume that authorship is homogeneous within each predefined unit. Consequently, if an authorship boundary falls within a sentence, such methods cannot localize it. 

To address the above-mentioned limitations, we propose a novel token-level detection method, which applies a document-level detection statistic locally at each token position. These per-token local statistics are then aggregated across neighboring tokens via kernel-weighted smoothing with an adaptive bandwidth. This design not only eliminates the requirement for labeled training data, but also achieves fine-grained token-level localization, without imposing restrictive assumptions on the underlying authorship boundaries.

\subsection{Contributions}
This paper makes the following contributions:
\begin{itemize}[leftmargin=*]
    \item \textbf{Methodology}. We propose a token-level localization procedure that aggregates neighboring token-level detection scores using kernel smoothing, together with an adaptive procedure for selecting the local bandwidth. Unlike fully supervised token-localization approaches, the proposed aggregation step does not require external token-level labeled training data. Unlike segmentation-based approaches, it operates directly at the token level and does not require a predefined sentence- or paragraph-level partition.
    \item \textbf{Theory}. We derive bandwidth-dependent upper bounds that characterize the bias--variance trade-off in estimating local score means. We further analyze the adaptive bandwidth-selection rule and provide an oracle-type guarantee for the smoothed score estimator under stated assumptions.
    \item \textbf{Experiments}. We evaluate the method across four datasets, four language models, and three human--AI collaboration patterns. The results show that our approach can improve token-level ranking performance relative to several baselines on both synthetic and real human--LLM coauthoring datasets. Ablation studies further show that both neighboring-token aggregation and adaptive bandwidth selection contribute positively to performance. 
\end{itemize}
We also release a public online analysis website at \url{https://huggingface.co/spaces/Jasmineame/detect-LLM-tokens}. A case study using the website to analyze an abstract co-written by a human and an LLM shows that the method achieves 94\% accuracy in predicting the authorship of tokens. 

\subsection{Organization}
The rest of the paper is organized as follows. Section 2 introduces the task setting, notation, and existing methods that serve as building blocks for our method. Section 3 introduces the proposed method, and Section 4 presents the theoretical analysis. Section 5 reports numerical studies evaluating the empirical performance of the proposed method. Section 6 illustrates the deployed website with a case study. Section 7 concludes the paper. Technical proofs and implementation details are deferred to the Appendix. 

\section{Preliminaries}\label{sec:preliminary}

\textbf{Task and setting}. We study token-level authorship detection in human--LLM coauthored text. Given a passage $\bm{X}=(X_1, X_2, \ldots, X_L)$, each token $X_t$ has an unobserved authorship label $C_t\in\{0,1\}$, where $C_t=0$ indicates that the token is human-written and $C_t=1$ indicates that it is LLM-generated. The goal is to infer the full label sequence $(C_1,C_2,\ldots,C_L)$ from the observed passage. We write $\bm{X}_{<t}=(X_1,\ldots,X_{t-1})$ for the preceding context, with the convention $\bm{X}_{<1}=\emptyset$.

\textbf{Aggregated token methods for LLM text detection}. We next review a class of document-level detectors built by aggregating token-level statistics. When a passage is generated by an LLM, each token $X_t$ is sampled from a conditional distribution $q_{t}(\cdot \mid \bm{X}_{<t})$ given its preceding context. Motivated by this autoregressive structure, methods such as Fast-DetectGPT, ImBD, and AdaDetectGPT \citep{bao2024fastdetectgpt, chen2025imitate, zhou2025adadetectgpt} compute a token-level score and compare it with a reference expectation. A representative document-level statistic is
\begin{equation}\label{eq:document-stats}
    \sum_{t=1}^{L} \left\{\Phi_t(X_t; \bm{X}_{<t}) - \mathbb{E}_{\widetilde{X}_t \sim q_t(\cdot\mid \bm{X}_{<t})}[\Phi_t(\widetilde{X}_t; \bm{X}_{<t})]\right\},
\end{equation}
where $\Phi_t$ is a token-level score function. The specific choice of $\Phi_t$ differs across methods. Fast-DetectGPT uses $\Phi_t(X_t;\bm{X}_{<t})=\log q_{t}(X_t\mid \bm{X}_{<t})$, the log-probability under a scoring language model. Since raw log-probability can be a weak discriminator, AdaDetectGPT uses $\Phi_t(X_t; \bm{X}_{<t})=\pi_{t}(X_t\mid \bm{X}_{<t})$, where $\pi_{t}$ is learned to better separate LLM texts from human-written ones. ImBD follows a similar principle. Public model checkpoints for AdaDetectGPT and ImBD are available\footnote{The checkpoints of AdaDetectGPT and ImBD are provided at \url{https://huggingface.co/spaces/stats-powered-ai/StatDetectLLM/tree/main} and \url{https://github.com/Jiaqi-Chen-00/ImBD}, respectively.}, so their detection scores can be used directly.

Although these methods are built from token-level quantities, Equation~\eqref{eq:document-stats} aggregates individual tokens over the entire passage and returns a single document-level statistic. It therefore does not directly recover the token-level authorship sequence $(C_1,C_2,\ldots,C_L)$, which is the target of our setting.

\textbf{Lepski's method}. The non-parametric estimation such as Nadaraya--Watson regression \citep{nadaraya1964estimating, watson1964smooth, wasserman2006all} often considers a family of estimators indexed by a bandwidth parameter $k>0$. Smaller bandwidths are sensitive to local changes but have higher variance, whereas larger bandwidths are more stable but can introduce bias through oversmoothing. This results in the bias-variance trade-off.  The ideal bandwidth should balance these two effects, but it is generally unknown in practice.

\begin{figure}[t]
    \centering
    \includegraphics[width=0.45\linewidth]{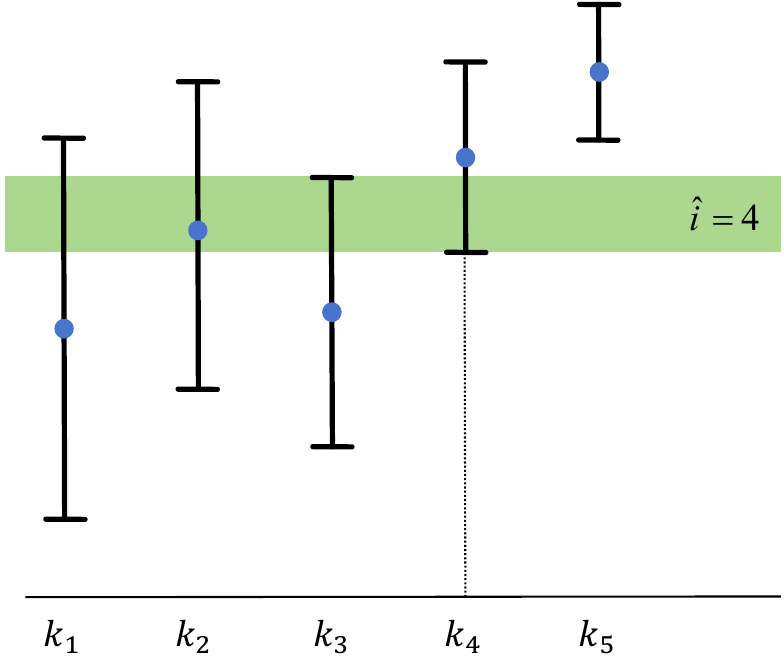}
    \caption{Graphical illustration of Lepski's bandwidth-selection rule over candidate bandwidths $k_1<k_2<\dots<k_5$. Each candidate bandwidth $k_i$ yields a local estimate (blue dot) and a confidence interval (black vertical interval). The green band denotes the intersection of the confidence intervals, starting from the one with the smallest bandwidth and continuing up to the largest accepted bandwidth. Lepski’s rule selects the largest bandwidth for which this intersection is non-empty. Here, the selected bandwidth is $k_4$.}
    \label{fig:lepski}
\end{figure}

\citet{lepski1997optimal} proposed a data-driven method for bandwidth selection. Figure~\ref{fig:lepski} illustrates the rule. For each candidate bandwidth $k_i$, let $I_i$ denote a confidence interval for the corresponding estimator. A candidate bandwidth $k_i$ is compatible with smaller bandwidths if $\bigcap_{j=1}^{i} I_j \neq \emptyset$; for example, $k_4$ is compatible in Figure~\ref{fig:lepski}. Lepski's method then selects the largest compatible bandwidth. The intuition is that smaller bandwidths typically have lower bias but higher variance, and thus serve as low-bias references. As long as the intervals remain compatible, the bias introduced by smoothing is not large enough to make the expectations of these estimators significantly different. Since larger bandwidths yield lower-variance estimators, Lepski's method selects the largest bandwidth that remains compatible with all smaller ones.

\section{Method}\label{sec:method}
This section introduces a kernel-smoothing framework for token-level authorship localization in human--LLM coauthored text. We begin by defining a token-level detection score, where human-written and LLM-generated tokens differ in expectation. Next, we illustrate the procedure that locally aggregates neighboring scores to reduce token-level noise.
 
The document-level statistics reviewed in Section~\ref{sec:preliminary} can be viewed as full-passage aggregations of token-level information. Our proposal instead aggregate the information locally. For each token $X_t$, let $S_t=\Phi_t(\bm{X}_{<t+1})$, where $\Phi_t$ is a predetermined scoring rule that may depend on the token and its context. This token-level detection score covers a broad class of token-level detection statistics described in Section~\ref{sec:preliminary}. Our main assumption is that this score differs, on average, between LLM- and human-authored tokens; we formalize this assumption below.

\begin{assumption}[Score separability]\label{assump:score-separable}
$\mu_1 \neq \mu_0$ where $\mu_{c}=\mathbb{E}[S_t\mid C_t=c]$ for $c\in \{0,1\}$.
\end{assumption}
Assumption~\ref{assump:score-separable} states that the expected token-level detection scores are homogeneous within each source and separated across sources. The homogeneity would like to simplify theoretical analysis, whereas the valid of separation is investigated below. Figure~\ref{fig:box_plot} provides a direct empirical evidence that this average separability is often presented for Fast-DetectGPT- and AdaDetectGPT-type scores on the datasets considered here. It can be seen that the expected scores differ significantly between human- and LLM-authored texts. Additionally, this separation is indirectly supported by the success of document-level detectors \citep{bao2024fastdetectgpt,zhou2025adadetectgpt}. Their statistics are built from averages of token-level scores and have been shown empirically to separate fully human- and LLM-written documents. Without loss of generality, throughout the remainder of the paper, we assume that $\Delta := \mu_1 - \mu_0 > 0$; otherwise, we replace the scoring rule $\Phi_t$ with $-\Phi_t$.

\begin{figure}[t]
  \centering
  \begin{subfigure}[b]{0.48\linewidth}
    \includegraphics[width=\linewidth]{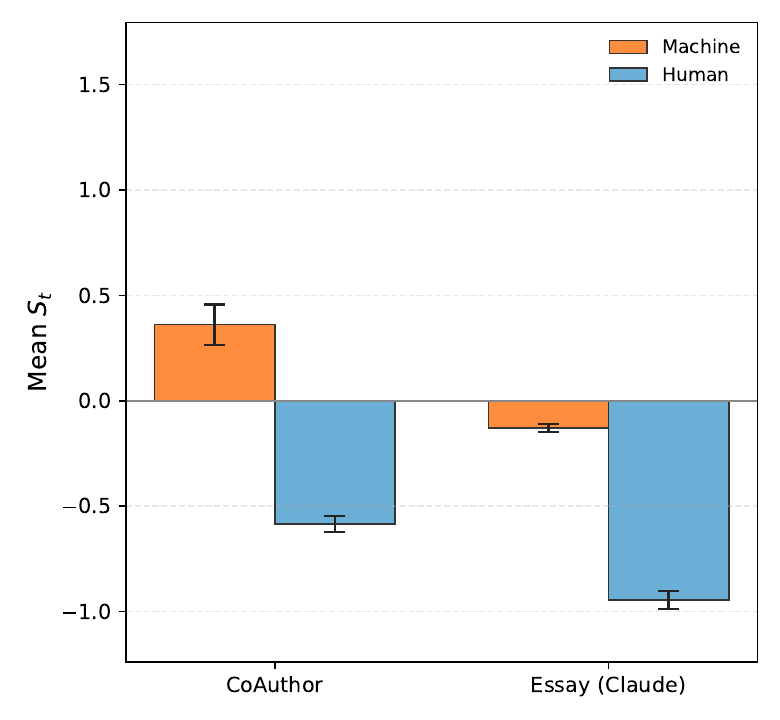}
    \caption{Mean of Fast-DetectGPT-type score.}
    \label{fig:s_t_coauthor}
  \end{subfigure}
  \hfill
  \begin{subfigure}[b]{0.48\linewidth}
    \includegraphics[width=\linewidth]{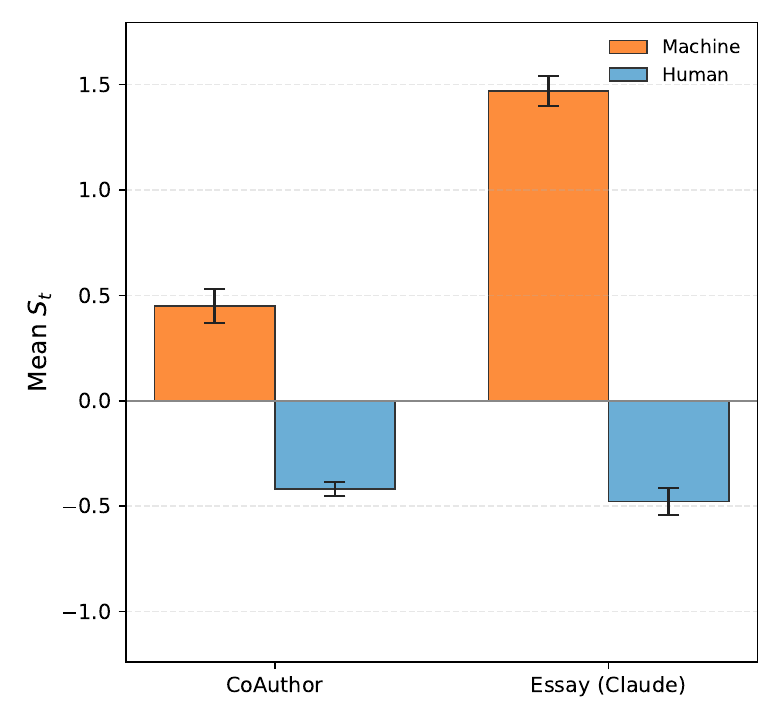}
    \caption{Mean of AdaDetectGPT-type score.}
    \label{fig:s_t_essay}
  \end{subfigure}
  \caption{Mean token-level detection scores for human- and LLM-authored tokens, with error bars denoting 95\% confidence interval. Panel (a) shows the Fast-DetectGPT-type score, and panel (b) shows the AdaDetectGPT-type score. Results are shown on the CoAuthor dataset and on a synthetic dataset described in Section~\ref{sec:compare-baselines}.}
  \label{fig:box_plot}
\end{figure}

To motivate our procedure, we note that a decision rule based directly on the raw score $S_t$ can be unstable because individual token scores are often noisy. To stabilize detection, we observe that authorship labels in coauthored text often persist locally. Any binary label sequence $(C_1,\ldots,C_L)$ can be decomposed into a sequence of constant segments such that there exist indices
\[
1=\nu_0<\nu_1<\cdots<\nu_M<\nu_{M+1}=L+1
\]
and $C_t$ is constant on each segment $[\nu_j,\nu_{j+1}-1]$. Define the minimum segment length as
\begin{equation}\label{eqn:tau}
\tau :=\min_{0\le j\le M}(\nu_{j+1}-\nu_j).
\end{equation}
It is easy to see that $\tau\geq 1$. In realistic coauthored writing, we expect $\tau$ to be much larger than $1$ since both human-written and LLM-generated content typically appears in contiguous units such as phrases, sentences, or paragraphs rather than switching authorship at every token. Motivated by this, we may reduce the variability of raw token scores by locally smoothing them. Specifically, for a bandwidth $k\ge 1$, define the local neighborhood window of token $t$ by
\begin{equation*}
    \textup{Adj}_k(t)=\{l: |l-t|\le k, 1\le l\le L\}.
\end{equation*}
We construct the smoothed score
\begin{equation}\label{sec:token-score}
    \widehat S_{t,k} =  \sum_{l\in \textup{Adj}_k(t)} w_{t,l}^{(k)}S_l,
\end{equation}
where the weights satisfy $w_{t,l}^{(k)}\ge 0$, $\sum\limits_{l\in \textup{Adj}_k(t)} w_{t,l}^{(k)}=1$, and $w_{t,l}^{(k)}=0$ for $l\notin \textup{Adj}_k(t)$. The smoothed version of token score $\widehat S_{t,k}$ can be interpreted as a local estimate of the latent authorship signal around token $t$. In this paper, we focus on a general symmetric kernel-weight smoother with the following form
\begin{equation}\label{sec:token-weight}
    w_{t,l}^{(k)}=\frac{K\left(\frac{|l-t|}{k}\right)}{\sum_{j\in \textup{Adj}_k(t)}
K\left(\frac{|j-t|}{k}\right)},
\qquad l\in \textup{Adj}_k(t),
\end{equation}
where $K(\cdot):[0,1] \to \mathbb{R}_+$ is a nonnegative, non-increasing kernel function. Smoothing reduces variance by averaging nearby scores, but it can introduce bias if the window crosses a human--LLM boundary. In that case, tokens whose labels differ from $C_t$ enter the local average, pulling $\widehat S_{t,k}$ away from the target mean $\mu_{C_t}$. The bandwidth $k$ therefore controls a local bias--variance trade-off: smaller values yield noisier but less contaminated estimates, whereas larger values yield more stable estimates at the cost of a higher risk of crossing an authorship boundary.

Given a smoothed score, we define the token-level classifier as $\widehat C_{t,k}=\mathbb{I}\{\widehat S_{t,k} \ge \eta\}$, where $\eta$ denotes the detection threshold. We choose $\eta$ by minimizing the within-cluster variation:
\begin{equation}\label{eq:clustering}
\arg\min_{\eta} \;\; \sum\limits_{i, j \in \{t \mid \widehat{S}_{t, k} \ge \eta \}} (\widehat{S}_{i, k} - \widehat{S}_{j, k})^2 + \sum\limits_{i, j \in \{t \mid \widehat{S}_{t, k} < \eta \}} (\widehat{S}_{i, k} - \widehat{S}_{j, k})^2.
\end{equation}
Since $\widehat S_{1,k}, \ldots, \widehat S_{L,k}$ are univariate, this optimal solution of the objective~\eqref{eq:clustering} can be found efficiently.

Algorithm~\ref{alg:token_detection} summarizes the complete token-level detection procedure.
\begin{algorithm}[H]
\caption{Token-Level LLM-Generated Text Detection}
\label{alg:token_detection}
\begin{algorithmic}[1]
    \Require Token sequence $\mathbf{X}=(X_1,\ldots,X_L)$, token score function $\Phi_t$, kernel function $K(\cdot)$, and bandwidth $k>0$
    \Ensure Token-level predictions $(\widehat{C}_{1,k},\dots,\widehat{C}_{L,k})$
    \State Compute raw token scores $S_t=\Phi_t(\bm X)$ for $t=1,\ldots,L$
    \For{$t = 1,\ldots,L$}
    \State Compute the smoothed score $\widehat{S}_{t,k}$ using Equations~\eqref{sec:token-score}--\eqref{sec:token-weight}
    \EndFor
    \State Choose a threshold $\eta$ that minimizes~\eqref{eq:clustering}
    \For{$t = 1,\ldots,L$}
    \State $\widehat{C}_{t,k} \leftarrow \mathbb{I}\{\widehat{S}_{t,k}\geq \eta\}$
    \EndFor
\end{algorithmic}
\end{algorithm}
We next discuss three main inputs to Algorithm~\ref{alg:token_detection}.
\begin{itemize}[leftmargin=*]
    \item The score function $\Phi_t$ should satisfy Assumption~\ref{assump:score-separable}; otherwise, the downstream detector has weak signal to classify human- and LLM-authored texts. In practice, stronger document-level token scores often provide stronger token-level localization. Motivated by the empirical performance reported by \citet{zhou2025adadetectgpt}, we use AdaDetectGPT in our main implementation, while noting that the method is compatible with the broader class of scores introduced in Section~\ref{sec:preliminary}.
    \item The kernel function determines how much weight is assigned to neighboring tokens. Theorem~\ref{thm:mse-general} analyzes general kernel weights, and Corollary~\ref{cor:special-kernel} restricts attentions to the uniform and triangular kernels. Since the triangular kernel is more robust to boundary contamination (see the discussion on Corollary~\ref{cor:special-kernel}), we use it as the default kernel in our implementation.
    \item The bandwidth $k$ should be large enough to reduce token-level noise but small enough to avoid excessive contamination across authorship boundaries. %
    We provide a \textit{data-driven} approach for selecting $k$, following \citet{lepski1997optimal}. Let $\{k_1,k_2,\ldots,k_M\}$ be a grid of candidate bandwidths with $k_1<k_2<\cdots<k_M$. For each candidate bandwidth, we compute the score estimator $\widehat{S}_{t,k_i}$ and a radius $\widehat{\mathfrak r}_{t,k_i}$ that quantifies its uncertainty, and then construct the confidence interval
    \[
    I_t(i)=\left[\widehat S_{t,k_i}-2\times\widehat{\mathfrak r}_{t,k_i},\,\widehat S_{t,k_i}+2\times\widehat{\mathfrak r}_{t,k_i}\right]  \textup{ for } i = 1, \ldots, M.
    \]
    We select the largest bandwidth whose interval remains compatible with all smaller-bandwidth intervals:
    \[
    \widehat i_t=\max_{i\in\{1,\ldots,M\}}\left\{i:\bigcap_{j=1}^{i} I_t(j)\ne \emptyset\right\}.
    \]
    The adaptive version of Algorithm~\ref{alg:token_detection} replaces the fixed-bandwidth score $\widehat S_{t,k}$ by $\widehat S_{t,\widehat k_t}$ where $\widehat k_t=k_{\widehat i_t}$.
\end{itemize}

We conclude this section by discussing two practical advantages of our proposal over existing approaches for localizing LLM-generated content in human--LLM coauthored text.
\begin{itemize}[leftmargin=*]
\item Compared with ML-based methods, our method does not require additional token-labeled training data. It instead reuses existing token-level detection scores and converts them into localized authorship predictions.
\item Compared with sentence- or paragraph-level methods, our method does not require a predefined text unit for detection. It operates directly at the token level, making it applicable when authorship changes occur within sentences or other irregular segments.
\end{itemize}

\section{Theory}\label{sec:theory}
This section starts with analyzing the bias--variance trade-off induced by the bandwidth $k$ in the score $\widehat{S}_{t,k}$ (Theorem~\ref{thm:mse-general}). We then instantiate this bound explicitly for uniform and triangular kernels (Corollary~\ref{cor:special-kernel}). Finally, we prove an oracle inequality showing that the data-driven selector for the bandwidth $k$ performs nearly as well as the best fixed bandwidth in the candidate grid (Theorem~\ref{thm:oracle-bound}).

The following theorem establishes an upper bound for the mean squared error of $\widehat S_{t,k}$ as an estimator of $\mu_{C_t}$ to explicitly characterize the bias--variance trade-off. 

\begin{theorem}[MSE of score estimator]\label{thm:mse-general}
Given the label sequence $C=(C_1,\ldots,C_L)$, let 
\[
\sigma^2=\sup_{1\le t\le L}\operatorname{Var}(S_t\mid C) \textup{ and } \rho(r)=\sup_{1\le t\le L-r}\left|\operatorname{Corr}(S_t,S_{t+r}\mid C)\right|
\]
for $r\ge 0$ with the convention that $\rho(0)=1$.
Suppose Assumption \ref{assump:score-separable} holds. Then conditioned on the label sequence $C$, for any token $t$,
\[
\textup{MSE}(\widehat S_{t,k}):= \mathbb E\left(\widehat S_{t,k}-\mu_{C_t}\right)^2
\le\Delta^2\pi_{t,k}^2+\frac{2\kappa\sigma^2}{n_{\mathrm{eff}}(t,k)},
\]
where $\pi_{t,k}=\sum\limits_{l\in \textup{Adj}_k(t):\, C_l\ne C_t}w_{t,l}^{(k)}$ , $n_{\mathrm{eff}}(t,k)=\left(\sum\limits_{l\in \textup{Adj}_k(t)} \left(w_{t,l}^{(k)}\right)^2\right)^{-1}$ and $\kappa = \sum\limits_{r=0}^{L}\rho(r)$.
\end{theorem}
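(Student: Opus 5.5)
We split the mean squared error into squared bias plus variance, conditionally on $C$:
\[
\mathbb E\bigl(\widehat S_{t,k}-\mu_{C_t}\bigr)^2=\bigl(\mathbb E[\widehat S_{t,k}\mid C]-\mu_{C_t}\bigr)^2+\operatorname{Var}(\widehat S_{t,k}\mid C).
\]
This is exact, with no cross term. The plan is to show that the first term equals $\Delta^2\pi_{t,k}^2$ and that the second is at most $2\kappa\sigma^2/n_{\mathrm{eff}}(t,k)$.

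\textbf{Bias term.} We read Assumption~\ref{assump:score-separable} as in its discussion: $\mathbb E[S_l\mid C]=\mu_{C_l}$, so the mean depends only on the token's own label. Since the weights sum to one,
\[
\mathbb E[\widehat S_{t,k}\mid C]-\mu_{C_t}=\sum_{l\in\textup{Adj}_k(t)} w_{t,l}^{(k)}\bigl(\mu_{C_l}-\mu_{C_t}\bigr).
\]
Terms with $C_l=C_t$ vanish. Every remaining term has $\mu_{C_l}-\mu_{C_t}=\pm\Delta$, with a sign common to all of them because only two labels exist. The bias therefore equals $\pm\Delta\pi_{t,k}$, and its square is exactly $\Delta^2\pi_{t,k}^2$.

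\textbf{Variance term.} Write $w_l=w_{t,l}^{(k)}$. Then
\[
\operatorname{Var}(\widehat S_{t,k}\mid C)=\sum_{l,m} w_lw_m\operatorname{Cov}(S_l,S_m\mid C)\le\sigma^2\sum_{l,m}|w_l||w_m|\,\rho(|l-m|).
\]
Next group the double sum by lag $r=|l-m|$. The lag-$0$ contribution is $\sum_l w_l^2$. For each $r\ge1$, ordered pairs come in two orientations, giving $2\sum_l w_lw_{l+r}$, and Cauchy--Schwarz gives $\sum_l w_lw_{l+r}\le\sum_l w_l^2$. Hence
\[
\operatorname{Var}(\widehat S_{t,k}\mid C)\le\sigma^2\Bigl(\sum_l w_l^2\Bigr)\Bigl(\rho(0)+2\sum_{r\ge1}\rho(r)\Bigr)\le 2\kappa\sigma^2\sum_l w_l^2,
\]
where the last step uses $\rho(0)=1\le 2$. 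Lags beyond $L-1$ contribute nothing, so summing to $L$ in $\kappa$ is harmless. Since $\sum_l w_l^2=1/n_{\mathrm{eff}}(t,k)$, the variance bound follows.

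\textbf{Main obstacle.} The only delicate point is the lag-grouping bookkeeping. One must count each lag twice for the two orientations and apply Cauchy--Schwarz to the shifted inner products, noting that the weights vanish outside $\textup{Adj}_k(t)$ so the shift stays within range. This is exactly what produces the factor $2\kappa$ rather than $\kappa$.

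A second, interpretive point is that the bias computation needs the mean to depend on the label only. We take Assumption~\ref{assump:score-separable}'s homogeneity to mean $\mathbb E[S_l\mid C]=\mu_{C_l}$ conditionally on the entire label sequence, and we state this reading explicitly.

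Adding the bias and variance bounds gives the claim.
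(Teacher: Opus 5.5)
Your proof is correct and follows essentially the same route as the paper. Both use the exact bias--variance decomposition, the bias $\pm\Delta\pi_{t,k}$ under the reading $\mathbb{E}[S_l\mid C]=\mu_{C_l}$ (which the paper also uses implicitly, and your sign bookkeeping is slightly more careful than its), and the bound $\operatorname{Var}\le\sigma^2\bigl(\sum_l w_l^2\bigr)\bigl(1+2\sum_{r\ge1}\rho(r)\bigr)\le 2\kappa\sigma^2/n_{\mathrm{eff}}(t,k)$. The only cosmetic difference is that you control the lag-$r$ inner products by Cauchy--Schwarz, whereas the paper applies $2w_lw_m\le w_l^2+w_m^2$ pairwise and then sums $\rho(|l-m|)$ over $m$ for each fixed $l$; both give the same bound.
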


Theorem~\ref{thm:mse-general} gives a direct bias--variance decomposition: 
\begin{itemize}[leftmargin=*]
\item The first term $\Delta^2\pi_{t,k}^2$ is the squared contamination bias induced when the smoothing window crosses an authorship boundary. The quantity $\pi_{t,k}$ measures the total smoothing weight assigned to neighboring tokens whose authorship differs from that of the target token $X_t$. For small value of $k$, the smoothing window contains a substantial mass of tokens from the same source as $X_t$, which keeps the bias term $\pi_{t,k}$ small. In particular, recall that $\tau$ denotes the minimum segment length defined in \eqref{eqn:tau}. When kernel function is non-increasing and $k < \tau$,  it can be shown that the total weight assigned to tokens from the other source is strictly less than one half, i.e., $\pi_{t,k}<1/2$.
\item The second term is the variance term, which is proportional to the inverse effective sample size $(n_{\mathrm{eff}}(t,k))^{-1} = \sum\limits_{l\in \textup{Adj}_k(t)} \left(w_{t,l}^{(k)}\right)^2$. A larger bandwidth usually increases $n_{\mathrm{eff}}(t,k)$, thus reducing $\widehat S_{t,k}$'s variance. For example, under the uniform kernel, $w_{t,l}^{(k)}=\frac{1}{2k+1}$ for $l\in\operatorname{Adj}_k(t)$, so $n_{\operatorname{eff}}(t,k)=2k+1$, which grows linearly with $k$. The factor $\kappa$ can be interpreted as a local dependence measure for the token-score process, indicating that weak dependence of token scores help reduce MSE. Specifically, in weak-dependence settings where token-level correlations decay exponentially with distance, e.g., $\rho(h)\lesssim e^{-ch}$, $\kappa$ is then bounded by a universal constant independent of $k$ and $L$. 
\end{itemize}

Based on Theorem \ref{thm:mse-general}, we immediately have the following corollary.
\begin{corollary}\label{cor:special-kernel}
Let $[a_t,b_t]$ denote the maximal contiguous segment containing token $t$ on which the authorship labels are constant, i.e. $C_j=C_t$ for all $j\in[a_t,b_t]$, and define $d_t=\min\{t-a_t, b_t-t\}$ as the distance from token $t$ to the nearest endpoint of this segment. Suppose the conditions of Theorem~\ref{thm:mse-general} hold and $k<\tau/2$. Then the following bounds hold.
\begin{itemize}[leftmargin=*]
    \item (Uniform Kernel) When $K(u) = \mathbb{I}\{0\le u\le 1\}$, we have 
    \begin{equation*}
        \textup{MSE}(\widehat S_{t,k}) \leq \Delta^2\left[\frac{(k-d_t)_+}{2k+1}\right]^2+\frac{2\kappa\sigma^2}{2k+1},
    \end{equation*}
    where $x_+=\max\{x,0\}$.
    \item (Triangular Kernel) When $K(u) = (1- |u|)_+$, we have
    \begin{equation*}
        \textup{MSE}(\widehat S_{t,k}) \le\Delta^2\left[\frac{(k-d_t)_+(k-d_t+1)_+}{2(k+1)^2}\right]^2
    +2\kappa\sigma^2\frac{2(k+1)^2+1}{3(k+1)^3}.
    \end{equation*}
\end{itemize}
\end{corollary}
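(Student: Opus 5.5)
The plan is to apply Theorem~\ref{thm:mse-general} directly. For each kernel it then suffices to compute two quantities in closed form: the contamination weight $\pi_{t,k}$ and the inverse effective sample size $n_{\mathrm{eff}}(t,k)^{-1}=\sum_{l}(w^{(k)}_{t,l})^2$. Substituting them into $\Delta^2\pi_{t,k}^2+2\kappa\sigma^2/n_{\mathrm{eff}}(t,k)$ gives the stated bounds. Throughout I would treat tokens whose window $\textup{Adj}_k(t)$ lies fully inside $\{1,\dots,L\}$, so that the weights depend only on $j=l-t\in\{-k,\dots,k\}$. Near the document ends an endpoint of $[a_t,b_t]$ may be $1$ or $L$ rather than an authorship change; there the contamination on that side is zero, but the truncated normalisation changes the variance term. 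I would state this convention explicitly and flag it as a caveat rather than hide it.

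\textbf{Step 1 (structure of contamination).} First I show that, under $k<\tau/2$, the window meets at most one foreign segment, on one side only, and that this segment fills the rest of the window on that side. Since $[a_t,b_t]$ has length at least $\tau>2k$, we have $(t-a_t)+(b_t-t)\ge 2k$. Without loss of generality $d_t=b_t-t$, so $t-a_t\ge 2k-d_t$.
\begin{itemize}
\item If $d_t\ge k$, the whole window is same-label and $\pi_{t,k}=0$.
\item If $d_t<k$, then $t-a_t>k$, so the left side is clean. The next segment on the right has length at least $\tau>k$, so every $l=t+j$ with $d_t<j\le k$ has $C_l\ne C_t$.
\end{itemize}
Hence in both cases
\[
\pi_{t,k}=\sum_{j=d_t+1}^{k} w_{t,t+j}^{(k)}.
\]

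\textbf{Step 2 (uniform kernel).} All weights equal $1/(2k+1)$. This immediately gives $\pi_{t,k}=(k-d_t)_+/(2k+1)$ and $n_{\mathrm{eff}}(t,k)=2k+1$, and the first bound follows.

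\textbf{Step 3 (triangular kernel).} Here the normalisation has to match the stated constants. The $(k+1)^2$ denominators indicate that the effective unnormalised weights are proportional to $k+1-|j|$ (the triangle vanishing just outside the window), so I would adopt that convention. Then
\[
\sum_{|j|\le k}(k+1-|j|)=(k+1)+k(k+1)=(k+1)^2,\qquad w_{t,t+j}^{(k)}=\frac{k+1-|j|}{(k+1)^2}.
\]
The contamination weight is
\[
\pi_{t,k}=\frac{1}{(k+1)^2}\sum_{j=d_t+1}^{k}(k+1-j)=\frac{1}{(k+1)^2}\sum_{m=1}^{k-d_t}m=\frac{(k-d_t)_+(k-d_t+1)_+}{2(k+1)^2}.
\]
For the variance term,
\[
\sum_{|j|\le k}(k+1-|j|)^2=(k+1)^2+\frac{k(k+1)(2k+1)}{3},
\]
so that
\[
n_{\mathrm{eff}}(t,k)^{-1}=\frac{3(k+1)+k(2k+1)}{3(k+1)^3}=\frac{2(k+1)^2+1}{3(k+1)^3}.
\]
Plugging both into Theorem~\ref{thm:mse-general} yields the second bound.

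\textbf{Main obstacle.} I expect the main difficulty to be Step 1, namely verifying rigorously that $k<\tau/2$ forces one-sided contamination that saturates the tail of the window. Together with this comes the bookkeeping for the kernel indexing convention and for windows truncated at the document ends. The remaining steps are routine summations.
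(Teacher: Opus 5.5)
Your proposal is correct and follows the paper's proof essentially step for step. Both apply Theorem~\ref{thm:mse-general}, use $k<\tau/2$ to show that the window crosses at most one authorship boundary, and compute $\pi_{t,k}$ and $n_{\mathrm{eff}}(t,k)$ in closed form for each kernel, with the same triangular weights $(k+1-|r|)/(k+1)^2$. Your explicit caveats, namely that the literal $K(|r|/k)$ would give weights $(k-|r|)/k^2$ and that windows truncated at the document ends change the variance term, address issues the paper's proof passes over silently, so keep them.
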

Corollary~\ref{cor:special-kernel} derives explicit MSE bounds for two commonly used smoothing kernels. In both cases, the bias term is zero whenever $k\le d_t$, under which condition the smoothing window does not cross the authorship boundary. Once \(k>d_t\), the local window crosses an authorship boundary, and tokens from the other source begin to contaminate the smoothed score. As a result, the bias term increases as bandwidth $k$ becomes larger. Meanwhile, when the dependence factor $\kappa$ is bounded or varies slowly with $k$, the variance term monotonely decreases with $k$ in both cases.

The two kernels differ in their bias/variance bounds. The uniform kernel assigns equal weights to all tokens in the window, yielding a smaller variance term. On the other hand, the triangular kernel is more robust to boundary contamination but at the cost of slightly increasing the variance.

The following theorem shows that with proper choice of the radius estimator $\widehat{\mathfrak r}_{t,k_i}$, the adaptive bandwidth selector is competitive with the oracle bandwidth in the candidate family. 

\begin{theorem}[Adaptive MSE oracle bound]\label{thm:oracle-bound}
Following the same notations as Theorem~\ref{thm:mse-general}, for the $t$-th token, let \(\widehat k_t\) be the bandwidth selected by the Lepski's rule over the candidate bandwidth grid \(\mathcal \{k_1,\ldots,k_M\}\). Suppose that there exists $\xi>0$ such that $\sup\limits_{1\le t\le L}|S_t -\mu_{C_t}|\le \xi$ almost surely, and $\{S_t\}_{t\geq 1}$ are conditionally independent given $C$. When the radius $\widehat{\mathfrak r}_{t,k_i}$ is taken as 
\begin{equation*}
\widehat{\mathfrak r}_{t,k_i} =\left\{2\xi^2\log(2M/\delta)n_{\mathrm{eff}}^{-1}(t,k_i)\right\}^{1/2}
\end{equation*}
for some $\delta>0$, and the kernel function is taken as either uniform kernel or triangular kernel, then the selected bandwidth satisfies
\begin{equation}\label{eqn:radius}
\begin{split}
\mathbb E\left[\left(\widehat S_{t,\widehat k_t}-\mu_{C_t}\right)^2\right]
\le& C_{\operatorname{Lep}}\min_{1\le i\le M}\left[
\Delta^2\left(\max_{1\le j\le i}\pi_{t,k_j}\right)^2
+\frac{\xi^2\log(2M/\delta)}{n_{\mathrm{eff}}(t,k_i)}
\right]
\\
&+4 \left(\xi +\max\{|\mu_0|,|\mu_1|\}\right)^2\delta,
\end{split}
\end{equation}
where $C_{\operatorname{Lep}}$ is a universal constant.
\end{theorem}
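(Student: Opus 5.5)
We follow the standard Lepski argument: build a high-probability event on which every candidate estimator is within its radius of its own mean, run a deterministic argument on that event, and control the complement crudely. Fix $t$ and write $\widehat S_i:=\widehat S_{t,k_i}$, $r_i:=\widehat{\mathfrak r}_{t,k_i}$, and $\bar S_i:=\mathbb E[\widehat S_i\mid C]=\sum_l w^{(k_i)}_{t,l}\mu_{C_l}$. Then $|\bar S_i-\mu_{C_t}|=\Delta\pi_{t,k_i}=:b_i$. Because the $S_l$ are conditionally independent and $|S_l-\mu_{C_l}|\le\xi$, Hoeffding's inequality for the weighted sum gives
\[
P\bigl(|\widehat S_i-\bar S_i|>r_i\mid C\bigr)\le 2\exp\!\Bigl(-\tfrac{2r_i^2}{4\xi^2\sum_l (w^{(k_i)}_{t,l})^2}\Bigr)=\delta/M .
\]
A union bound over $i$ yields the event $\mathcal E=\{|\widehat S_i-\bar S_i|\le r_i\ \forall i\}$ with $P(\mathcal E^c)\le\delta$. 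The radius was chosen so that these constants match; if they are off by a factor, we absorb it into $C_{\operatorname{Lep}}$ or the constant $4$.

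On $\mathcal E$ we argue deterministically. Fix any $i$ and set $B_i=\max_{j\le i}b_j$. The radii $r_j$ are decreasing in $j$, since $n_{\mathrm{eff}}(t,k)$ is increasing in $k$ for both the uniform and triangular kernels; we check this from the explicit formulas behind Corollary~\ref{cor:special-kernel}, and this is where the kernel restriction enters. For every $j\le i$ the interval $I_t(j)$ contains $[\mu_{C_t}-b_j-r_j,\,\mu_{C_t}+b_j+r_j]$ shrunk appropriately. The point to secure is that $\mu_{C_t}$ lies in all $I_t(j)$, $j\le i$, whenever $b_j\le r_j$.

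The main obstacle is that the bias need not be dominated by the radius, so non-emptiness of the intersection is not automatic. We split into two cases.

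\emph{Case $B_i\le r_i$.} Then $b_j\le r_i\le r_j$ for all $j\le i$, so $|\widehat S_j-\mu_{C_t}|\le 2r_j$ and $\mu_{C_t}\in\bigcap_{j\le i}I_t(j)$. Hence $\widehat i_t\ge i$. Because the intervals for $j\le\widehat i_t$ intersect, there is a common point $z$ with $|\widehat S_{\widehat i_t}-z|\le 2r_{\widehat i_t}\le 2r_i$ and $|z-\widehat S_i|\le 2r_i$. Therefore
\[
|\widehat S_{\widehat i_t}-\mu_{C_t}|\le 4r_i+|\widehat S_i-\mu_{C_t}|\le 5r_i+b_i .
\]

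\emph{Case $B_i>r_i$.} Let $i^*\le i$ be the largest index with $B_{i^*}\le r_{i^*}$; this index exists provided $b_1\le r_1$, and otherwise $r_1<b_1\le B_i$ and the argument below still applies with $i^*=1$. Applying the first case to $i^*$ gives an error at most $5r_{i^*}+b_{i^*}$. To compare $r_{i^*}$ with $r_i+B_i$: for $i^*<i$, maximality gives $B_{i^*+1}>r_{i^*+1}$. Since consecutive grid radii differ by a bounded ratio (an extra assumption we may need, e.g.\ a geometric grid) or, more simply, since $r_{i^*}\le r_1$ is the case to bound, we fall back to $|\widehat S_{\widehat i_t}-\mu_{C_t}|\le|\widehat S_{\widehat i_t}-\widehat S_1|+|\widehat S_1-\mu_{C_t}|$.

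This comparison is the step I expect to be delicate. If a clean bound fails, the fallback is to replace the $\max_{j\le i}\pi$ term by an inflated constant that absorbs the grid ratio.

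Squaring the on-$\mathcal E$ bound gives the term $C(\Delta^2B_i^2+r_i^2)$, which is exactly the bracket in \eqref{eqn:radius} since $r_i^2=2\xi^2\log(2M/\delta)/n_{\mathrm{eff}}$. On $\mathcal E^c$, every $\widehat S$ is a convex combination of the $S_l$, so $|\widehat S_{t,\widehat k_t}-\mu_{C_t}|\le\xi+2\max|\mu_c|\le 2(\xi+\max|\mu_c|)$. This contributes at most $4(\xi+\max|\mu_c|)^2\delta$. Finally, we take the minimum over $i$.
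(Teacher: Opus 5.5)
Your concentration step matches the paper's verification of its condition (ii): Hoeffding with ranges $2\xi w_{t,l}^{(k_i)}$ gives exactly $\delta/M$ for the stated radius, followed by a union bound. Your Case $B_i\le r_i$ is correct, and your bound on $\mathcal E^c$ gives the $4(\xi+\max\{|\mu_0|,|\mu_1|\})^2\delta$ term. The gap is the case $B_i>r_i$ with $1\le i^*<i$, which you leave open. There you need $r_{i^*}\lesssim B_i+r_i$, and neither fallback provides it. Comparing through $\widehat S_1$ yields $5r_1+b_1$, but $r_1$ is the \emph{largest} radius and can exceed $B_i+r_i$ by an arbitrary factor (take $b_1=0$, $r_1$ huge, $B_i$ and $r_i$ small). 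That comparison is valid only in the subcase $b_1>r_1$, where it gives $5r_1+b_1<6B_i$. Inflating the constant to ``absorb the grid ratio'' is not an argument unless a ratio bound is assumed and actually used.

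The paper closes this step with a grid condition that does not appear in the theorem statement. It verifies the hypotheses of Corollary~4 of Su et al.\ (2020), including $\kappa_0\mathfrak r_{t,k_i}\le\mathfrak r_{t,k_{i+1}}\le\mathfrak r_{t,k_i}$, using the geometric grid $k_{i+1}=2k_i$ and the explicit $n_{\mathrm{eff}}$ formulas to take $\kappa_0=1/3$; consequently $C_{\operatorname{Lep}}\propto\kappa_0^{-2}$. In your notation this is the chain $B_i\ge B_{i^*+1}>r_{i^*+1}\ge\kappa_0 r_{i^*}$, which bounds the error by $5r_{i^*}+b_{i^*}\le(5/\kappa_0+1)B_i$.

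You can avoid the grid condition altogether by splitting on $\widehat i_t\ge i$ versus $\widehat i_t<i$, rather than on $B_i\le r_i$. Your Case-1 error estimate used only $\widehat i_t\ge i$, so it gives $5r_i+b_i$ there. If $\widehat i_t<i$, then $I_t(\widehat i_t+1)$ misses the nonempty interval $\bigcap_{j\le\widehat i_t}I_t(j)$. Since these are intervals on a line, it must then be disjoint from a single $I_t(j)$ with $j\le\widehat i_t$. On $\mathcal E$ this forces $2r_j+2r_{\widehat i_t+1}<|\widehat S_j-\widehat S_{\widehat i_t+1}|\le r_j+b_j+r_{\widehat i_t+1}+b_{\widehat i_t+1}$. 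Hence $r_{\widehat i_t}\le r_j<2B_{\widehat i_t+1}\le 2B_i$, and the error is at most $r_{\widehat i_t}+b_{\widehat i_t}<3B_i$. This uses only nonincreasing radii and the running maximum $B_i$. It gives $|\widehat S_{t,\widehat k_t}-\mu_{C_t}|\le 5r_i+3B_i$ on $\mathcal E$ for every $i$, which squares to the bracketed oracle term.
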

Theorem~\ref{thm:oracle-bound} shows that the Lepski-type bandwidth selector performs nearly as well as the best fixed bandwidth in the candidate grid when $\xi$ is known, even though that best bandwidth depends on the unknown local authorship structure. To see why it yields near oracle performance, note that the two terms on the right-hand-side inside the minimum correspond to the contamination bias term and the variance term, respectively. Thus, minimizing over the candidate bandwidths captures the optimal bias--variance trade-off available within the grid. 

To conclude this section, we notice that achieving this oracle bound requires to specify the radius $\widehat{\mathfrak r}_{t,k_i}$ using the scale parameter $\xi$, which may be unknown. In practice, we replace $\xi$ with a variance estimate of the raw token scores and defer the implementation details to Appendix~\ref{app:implementation}.

\section{Experiments}\label{sec:experiments}
This section evaluates the proposed method empirically. Section~\ref{sec:ablation} studies the contribution of local aggregation, kernel choice, and adaptive bandwidth selection through an ablation study. Section~\ref{sec:compare-baselines} and Section~\ref{sec:real} compare our method with existing baselines on synthetic and realistic human--LLM coauthored texts respectively. Section~\ref{sec:sensitivity} examines robustness to authorship-transition patterns and generation temperature. Additional implementation details are provided in Appendix~\ref{app:implementation}.

\subsection{Ablation study}\label{sec:ablation}
Our ablation study investigates the contributions of three components: (i) neighboring-token aggregation, which induces a bias--variance trade-off in local score estimation; (ii) the use of kernel weighting, where we compare the triangular kernel with the uniform kernel, corresponding to unweighted averaging; and (iii) adaptive bandwidth selection via the Lepski rule, which is compared with fixed bandwidths.
 
\textbf{Competing methods}. Toward that end, we compare the following variants:
\begin{itemize}[leftmargin=*]
\item A raw-score baseline that uses each token-level score directly for classification. This is equivalent to our method with $|\textup{Adj}_k(t)| = 1$.
\item Fixed-width kernel smoothing with either the uniform kernel or the triangular kernel. The bandwidth is fixed at $k=7$, so $|\textup{Adj}_k(t)| = 15$.
\item Triangular-kernel smoothing with bandwidth selected adaptively by the Lepski rule.
\item An oracle variant that provides an upper benchmark within the candidate bandwidth set: for each document, it selects $|\textup{Adj}_k(t)|$ from $\{1,15,63,127,255\}$ to maximize performance.
\end{itemize}
For fairness, all variants use the same underlying token-level scoring function.

\textbf{Data generation}. The ablation study uses synthetic human--LLM coauthored texts. We start from a fully human-written text $\bm{X}$ with $N$ sentences. Let $q$ be a positive integer and assume, for simplicity, that $N/(2q)$ is an integer\footnote{If $N/(2q)$ is not an integer, we first partition the sample into $\lfloor N/(2q)\rfloor$ blocks of $2q$ sentences and treat the remaining sentences as a final block containing fewer than $2q$ sentences.}. For each block of $2q$ sentences, we sample an integer $i$ uniformly from $\{1,\ldots,2q\}$, retain the first $i$ sentences as human-written text, prompt an LLM with this prefix to generate the remaining part of the block. The prompt is provided in Appendix~\ref{sec:experiments-details}. Repeating this procedure over all blocks produces a coauthored text $\widetilde{\bm{X}}$ with alternating human-written and LLM-generated segments. This construction gives exact token-level ground-truth labels by design.

We adopt this sentence-level design for three reasons. First, sentences are natural linguistic units, adopting this design makes the data generation process interpretable. Second, because the LLM continuation is generated conditionally on the human-written prefix, the authorship boundary remains fluent, making the setting more realistic than directly concatenating unrelated human and LLM texts. Third, this setting approximates common human--AI writing workflows in which a writer keeps part of a model-generated continuation while modifying or replacing other parts.

\textbf{Text sources and LLM for coauthoring}. Implementing the aforementioned design requires human-written source texts. We use three corpora: Writing \citep{fan2018hierarchical}, XSUM \citep{Narayan2018DonGM}, and ESSAY \citep{verma2024ghostbuster}. For each corpus, we randomly sample 100 human-written texts and apply the generation procedure above to obtain 100 coauthored texts. These corpora cover different writing styles: Writing contains story-style texts, XSUM contains news-style texts, and ESSAY contains education-related texts. The coauthoring LLM is \texttt{google/gemma-2b-it} \citep{team2024gemma}, which is publicly available at \url{https://huggingface.co/google/gemma-2b-it}.

\textbf{Evaluation criterion}. For each text, we compute token-level detection scores and evaluate them against the ground-truth token labels using area under the curve (AUC). We then report the median AUC across documents in each dataset.  

\begin{table}[H]
    \centering
    \small
    \renewcommand{\arraystretch}{1.0}
    \setlength{\tabcolsep}{4pt}
    \caption{Ablation of aggregation methods as $q$ varies from 1 to 8.}
    \label{tab:ablation_adgpt}
    \resizebox{\textwidth}{!}{%
    \begin{tabular}{
        l
        l
        c
        *{8}{S[table-format=1.3]}
    }
    \toprule
    {Dataset} & {Kernel} & {$|\textup{Adj}_k(t)|$}
        & {$q=1$} & {$q=2$} & {$q=3$} & {$q=4$} & {$q=5$} & {$q=6$} & {$q=7$} & {$q=8$} \\
    \midrule

    \multirow{5}{*}{Writing}
        & \multirow{2}{*}{Uniform} & {1}
        & {0.614} & {0.603} & {0.601} & {0.600} & {0.601} & {0.605} & {0.596} & {0.604} \\
        &                         & {15}
        & {0.757} & {0.832} & {0.850} & {0.878} & {0.870} & {0.886} & {0.883} & {0.887} \\
    \cdashline{2-11}[0.4pt/1.5pt]
        & \multirow{3}{*}{Triangular} & {15}
        & {0.726} & {0.855} & {0.873} & {0.908} & {0.900} & {0.925} & {0.909} & {0.928} \\
        &                            & {Auto}
        & {0.720} & {0.848} & {0.871} & {0.918} & {0.912} & {0.939} & {0.919} & {0.948} \\
        &                            & {Oracle}
        & {0.752} & {0.855} & {0.874} & {0.916} & {0.916} & {0.948} & {0.951} & {0.969} \\
    \midrule

    \multirow{5}{*}{XSUM}
        & \multirow{2}{*}{Uniform} & {1}
        & {0.601} & {0.576} & {0.569} & {0.563} & {0.567} & {0.565} & {0.568} & {0.555} \\
        &                         & {15}
        & {0.775} & {0.823} & {0.833} & {0.829} & {0.838} & {0.851} & {0.836} & {0.835} \\
    \cdashline{2-11}[0.4pt/1.5pt]
        & \multirow{3}{*}{Triangular} & {15}
        & {0.771} & {0.856} & {0.867} & {0.868} & {0.867} & {0.889} & {0.879} & {0.875} \\
        &                            & {Auto}
        & {0.761} & {0.854} & {0.878} & {0.879} & {0.878} & {0.898} & {0.895} & {0.889} \\
        &                            & {Oracle}
        & {0.771} & {0.856} & {0.876} & {0.901} & {0.914} & {0.940} & {0.939} & {0.954} \\
    \midrule

    \multirow{5}{*}{ESSAY}
        & \multirow{2}{*}{Uniform} & {1}
        & {0.571} & {0.557} & {0.547} & {0.537} & {0.536} & {0.535} & {0.531} & {0.525} \\
        &                         & {15}
        & {0.712} & {0.736} & {0.741} & {0.735} & {0.728} & {0.735} & {0.719} & {0.706} \\
    \cdashline{2-11}[0.4pt/1.5pt]
        & \multirow{3}{*}{Triangular} & {15}
        & {0.699} & {0.770} & {0.769} & {0.766} & {0.762} & {0.776} & {0.747} & {0.735} \\
        &                            & {Auto}
        & {0.682} & {0.768} & {0.773} & {0.775} & {0.770} & {0.785} & {0.758} & {0.739} \\
        &                            & {Oracle}
        & {0.703} & {0.770} & {0.779} & {0.822} & {0.816} & {0.846} & {0.823} & {0.846} \\
    
    \bottomrule
    \end{tabular}%
    }
\end{table}

The results are reported in Table~\ref{tab:ablation_adgpt}. Across all three datasets and values of $q$, the results show three consistent patterns:
\begin{itemize}[leftmargin=*]
\item Aggregating neighboring token scores consistently improves over the raw token-level score. This confirms the variance-reduction property of local smoothing.
\item The triangular kernel generally outperforms the uniform kernel. This is consistent with the intuition that, near an authorship boundary, the uniform kernel assigns equal weights to all tokens in the local window and therefore places relatively larger weight on tokens from the other source, increasing the contamination bias.
\item The Lepski rule adapts the bandwidth locally and typically improves over a fixed bandwidth. Its performance is also close to the oracle variant, especially on Writing and XSUM, even though the oracle uses label information that is unavailable in practice.
\end{itemize}

\subsection{Comparison with state-of-the-art methods}\label{sec:compare-baselines}

We next compare the proposed method with existing baseline methods on synthetic datasets. The data generation process is the same as in Section~\ref{sec:ablation}, except that we fix $q=2$. To evaluate how each method generalizes beyond the open-source model used in the ablation study, we generate coauthored texts with three closed-source LLMs: Gemini-3.1-flash-lite \citep{google2026gemini31flashlitecard}, Grok-4.1-Fast-non-reasoning \citep{grok2025} (denoted by Grok-4.1-FNR), and Claude-4.5-haiku \citep{anthropic2025claudehaiku45card}.

We consider five baseline methods. The first one is \textit{DeBERTa} \citep{su2025haco}, a token-level ML-based detector trained to predict the author of each token. Because there are few token-level baselines for mixed-authorship detection, we also include two sentence-level methods, %
corresponding to \textit{SegFormer} and \textit{SenPred} \citep{bai2023segformer, kushnareva2024boundary}, which require additional training data to produce sentence-level predictions. Finally, we further include two segmentation-based methods: the ML-based boundary detector \textit{TriBERT} \citep{zeng2024towards} and the topic-based segmentation method \textit{TextTiling} \citep{hearst1997text}. For all ML-based baselines, namely \textit{DeBERTa}, \textit{SegFormer}, \textit{TriBERT}, and \textit{SenPred}, we construct training data from a separate Yelp dataset \citep{zhangCharacterlevelConvolutionalNetworks2015} before evaluating on Writing, XSUM, and ESSAY, in order to avoid data leakage. Dataset construction and training details are deferred to Appendix~\ref{sec:experiments-details}. Our method uses AdaDetectGPT as the token-level scoring function.

As in Section~\ref{sec:ablation}, we use token-level AUC for evaluation. The two token-level methods, ours and DeBERTa, directly produce token-level scores. For sentence-level baselines SegFormer and SenPred, we assign each sentence-level predicted score to all tokens in that sentence and then compute token-level AUC against the ground-truth token labels. TriBERT and TextTiling only partition a text into segments and do not assign authorship labels to the segments. For a fair comparison, we apply the same token-level scoring function to each token, aggregate token scores within each segment according to Equation~\eqref{eq:document-stats}, and assign the resulting segment-level score to all tokens in that segment.
\begin{table}[t]
    \centering
    \small
    \renewcommand{\arraystretch}{1.15}
    \setlength{\tabcolsep}{4pt}
    \caption{Median document-level AUC for three black-box LLMs and three source datasets. The best performance is \textbf{bold}.}
    \label{tab:comparison_table}
    \resizebox{\textwidth}{!}{%
    \begin{tabular}{
        l
        *{9}{S[table-format=1.3]}
    }
    \toprule
    & \multicolumn{3}{c}{Gemini-3.1-flash-lite}
    & \multicolumn{3}{c}{Grok-4.1-FNR}
    & \multicolumn{3}{c}{Claude-4.5-haiku} \\
    \cmidrule(lr){2-4}
    \cmidrule(lr){5-7}
    \cmidrule(lr){8-10}
    {Method}
        & {Writing} & {XSUM} & {ESSAY}
        & {Writing} & {XSUM} & {ESSAY}
        & {Writing} & {XSUM} & {ESSAY} \\
    \midrule
    DeBERTa
        & 0.822 & 0.813 & 0.833
        & 0.812 & 0.814 & 0.790
        & 0.757 & 0.781 & 0.741 \\
    SegFormer
        & 0.793 & \textbf{0.893} & 0.834
        & 0.740 & 0.781 & 0.646
        & 0.713 & 0.762 & 0.616 \\
    SenPred
        & 0.606 & 0.765 & 0.751
        & 0.528 & 0.668 & 0.652
        & 0.573 & 0.628 & 0.692 \\
    TextTiling
        & 0.546 & 0.553 & 0.531
        & 0.552 & 0.559 & 0.523
        & 0.551 & 0.556 & 0.522 \\
    TriBERT
        & 0.706 & 0.716 & 0.711
        & 0.755 & 0.776 & 0.762
        & 0.720 & 0.758 & 0.717 \\
    \hdashline
    Ours
        & \textbf{0.845} & 0.870 & \textbf{0.843}
        & \textbf{0.848} & \textbf{0.882} & \textbf{0.855} 
        & \textbf{0.851} & \textbf{0.887} & \textbf{0.842} \\
    \bottomrule
    \end{tabular}%
    }
\end{table}
The results are summarized in Table~\ref{tab:comparison_table}. Our method achieves the highest AUC in eight of the nine settings and remains competitive in the remaining setting. Compared with ML-based baselines, it is more stable across generators and source datasets. Compared with TextTiling, a training-free baseline, our method achieves more accurate authorship classification. This improvement is expected, as TextTiling segments text based on topical coherence rather than source.

\subsection{Real-data analysis}\label{sec:real}

We also evaluate the proposed method on a real-world CoAuthor dataset \citep{lee2022coauthor}, which was collected under a realistic human--LLM collaborative writing setting. When collecting this dataset, volunteers wrote through a web interface that began with a system-provided prompt. During writing, GPT-3 provided sentence-level suggestions, which writers could accept, reject, or revise before incorporating them into the final document. The resulting documents thus contain both human- and LLM-authored tokens. The dataset contains 1,445 documents written by 63 volunteers, including creative stories and argumentative essays.

The raw CoAuthor data provide character-level authorship labels, which we convert to token-level labels for evaluation. In this conversion, we first derive word-level labels as follows: (i) the initial prompt is treated as human-written; (ii) if the first character of a word is human-written, the entire word is labeled as human-written, reflecting the common case in which a writer begins a word and then completes or edits it with model assistance; and (iii) otherwise, the word label is determined by majority vote over its character labels. We next map word-level labels to token-level labels. If a word corresponds to one token, the token receives the word label. If a tokenizer splits a word into multiple tokens, all resulting tokens receive the word label.

For comparison, we use the same baselines as in Table~\ref{tab:comparison_table}. We additionally include PaLD \citep{lei2025pald}, a training-free text segmentation method based on combinatorial optimization\footnote{PaLD is not considered in Section~\ref{sec:compare-baselines} because of its high computational cost.}. We evaluate all methods using the same token-level AUC protocol as above. For methods that output sentence- or segment-level predictions, we assign each prediction to all tokens in the corresponding sentence or segment before computing the AUC.

Table~\ref{tab:Comparison_in_realistic_dataset} shows that our method outperforms all baselines on the realistic coauthoring dataset. The margin over ML-based baselines such as DeBERTa and SegFormer is larger than that observed on the synthetic datasets, suggesting that ML-based methods may be sensitive to distribution shift between synthetic training data and realistic human--AI coauthoring text. Our method also outperforms sentence- or segment-level baselines such as SenPred and PaLD. These methods implicitly assume that an entire sentence or segment has a single author, an assumption that need not hold in CoAuthor, where model-assisted edits can occur within a sentence.

\begin{table}[t]
    \centering
    \renewcommand{\arraystretch}{1.15}
    \setlength{\tabcolsep}{3pt}
    \caption{Median document-level AUC on the real-world CoAuthor dataset.}
    \label{tab:Comparison_in_realistic_dataset}
    \begin{tabular}{c c c c c c c}
    \toprule
        {DeBERTa}
        & {PaLD}
        & {SegFormer}
        & {SenPred}
        & {TriBERT}
        & {TextTiling}
        & Ours \\
    \midrule
        0.491
        & 0.668
        & 0.521
        & 0.685
        & 0.703
        & 0.601
        & \textbf{0.765} \\
    \bottomrule
    \end{tabular}%
\end{table}

\subsection{Sensitivity analysis}\label{sec:sensitivity}
This section examines how robust different methods are to changes in human–LLM authorship transition patterns and the LLM generation temperature.

\textbf{Robustness to authorship-transition patterns}. We first evaluate robustness under a more irregular switching scheme. Rather than dividing each passage into fixed-size blocks, we allow the source of the text to change at arbitrary sentence boundaries. For each document, we sample the number of transitions $b$ uniformly from $\{1,\cdots,N\}$, and then sample $b$ boundary locations uniformly without replacement from $\{1,\cdots,N\}$. These boundaries divide the passage into variable-length segments. The first segment is human-written, and the authorship then alternates across segments between LLM-generated and human-written text. Each LLM-generated segment is generated based on the human-written segment immediately before it.  We compare against the same set of baseline methods used in Table~\ref{tab:comparison_table}.

Table~\ref{tab:random_transition} shows that our method achieves the best AUC in seven of the nine settings. These results demonstrate the robustness of our proposal to different authorship-transition patterns.

\begin{table}[t]
    \centering
    \small
    \renewcommand{\arraystretch}{1.15}
    \setlength{\tabcolsep}{4pt}
    \caption{Median document-level AUC under the irregular authorship-transition pattern described in Section~\ref{sec:sensitivity}.}
    \label{tab:random_transition}
    \resizebox{\textwidth}{!}{%
    \begin{tabular}{
        l
        *{9}{S[table-format=1.3]}
    }
    \toprule
    & \multicolumn{3}{c}{Gemini-3.1-flash-lite}
    & \multicolumn{3}{c}{Grok-4.1-FNR}
    & \multicolumn{3}{c}{Claude-4.5-haiku} \\
    \cmidrule(lr){2-4}
    \cmidrule(lr){5-7}
    \cmidrule(lr){8-10}
    {Method}
        & {Writing} & {XSUM} & {ESSAY}
        & {Writing} & {XSUM} & {ESSAY}
        & {Writing} & {XSUM} & {ESSAY} \\
    \midrule
DeBERTa
    & 0.799 & 0.855 & {\bfseries 0.847}
    & 0.811 & 0.842 & 0.807
    & 0.741 & 0.807 & 0.797 \\
SegFormer
    & 0.765 & {\bfseries 0.866} & 0.806
    & 0.754 & 0.816 & 0.639
    & 0.657 & 0.715 & 0.587 \\
SenPred
    & 0.586 & 0.727 & 0.704
    & 0.518 & 0.660 & 0.602
    & 0.599 & 0.611 & 0.655 \\
TextTiling
    & 0.609 & 0.664 & 0.631
    & 0.656 & 0.650 & 0.610
    & 0.631 & 0.652 & 0.624 \\
TriBERT
    & 0.748 & 0.746 & 0.721
    & 0.747 & 0.751 & 0.742
    & 0.703 & 0.742 & 0.731 \\
Ours
    & {\bfseries 0.826} & 0.831 & 0.813
    & {\bfseries 0.825} & {\bfseries 0.849} & {\bfseries 0.838}
    & {\bfseries 0.798} & {\bfseries 0.837} & {\bfseries 0.800} \\
    \bottomrule
    \end{tabular}}
\end{table}

\textbf{Robustness across temperatures}. Temperature controls the sampling behavior of language models and can therefore affect the generated synthetic data. To assess whether our conclusions depend on this parameter, we follow the data generation process in Section~\ref{sec:compare-baselines} with temperatures $T\in\{0.3,0.6,0.9\}$. We then evaluate all methods on the resulting datasets. Table~\ref{tab:Temperature_Analysis} shows that our method remains robust across temperatures. It achieves the best performance on XSUM and ESSAY at all three temperatures, achieves the best performance on Writing at $T=0.6$ and $T=0.9$, and remains close to the best method on Writing at $T=0.3$. Overall, the relative ranking of different methods remains largely stable across temperatures, suggesting that our main conclusions are not sensitive to the choice of sampling temperature.
\begin{table}[t]
    \centering
    \small
    \renewcommand{\arraystretch}{1.15}
    \setlength{\tabcolsep}{4pt}
    \caption{Sensitivity to generation temperature for datasets generated by Grok-4.1-FNR.}
    \label{tab:Temperature_Analysis}
    \resizebox{\textwidth}{!}{%
    \begin{tabular}{
        l
        c c c
        c c c
        c c c
    }
    \toprule
    Temperature
    & \multicolumn{3}{c}{0.3}
    & \multicolumn{3}{c}{0.6}
    & \multicolumn{3}{c}{0.9} \\
    \cmidrule(lr){2-4}
    \cmidrule(lr){5-7}
    \cmidrule(lr){8-10}
    {Method}
        & {Writing} & {XSUM} & {ESSAY}
        & {Writing} & {XSUM} & {ESSAY}
        & {Writing} & {XSUM} & {ESSAY} \\
    \midrule
    DeBERTa
    & \textbf{0.837} & {0.852} & {0.824}
    & {0.832} & \textbf{0.879} & {0.808}
    & {0.838} & {0.844} & {0.825} \\
    SegFormer
    & 0.740 & 0.790 & 0.637
    & 0.741 & 0.794 & 0.643
    & 0.714 & 0.800 & 0.648 \\
    SenPred
    & 0.545 & 0.671 & 0.637
    & 0.537 & 0.677 & 0.627
    & 0.526 & 0.669 & 0.639 \\
    TextTiling
    & 0.558 & 0.561 & 0.526
    & 0.549 & 0.542 & 0.519
    & 0.545 & 0.562 & 0.520 \\
    TriBERT
    & 0.751 & 0.759 & 0.762
    & 0.759 & 0.760 & 0.749
    & 0.738 & 0.780 & 0.762 \\
    Ours
    & {0.834} & \textbf{0.864} & \textbf{0.858}
    & \textbf{0.842} & {0.877} & \textbf{0.851}
    & \textbf{0.849} & \textbf{0.873} & \textbf{0.848} \\
    \bottomrule
    \end{tabular}%
    }
\end{table}

\subsection{Runtime analysis}

We finally evaluate the computational cost of the proposed method on the synthetic documents considered in Section~\ref{sec:compare-baselines}. For each document, we record the runtime from the input text to the final token-level scores. We include both the fixed-width and adaptive-width variants with the triangular kernel. Figure~\ref{fig:run_time} shows that runtime increases approximately linearly with sequence length for both variants. The adaptive-bandwidth method takes about three seconds for a document with 1,300 tokens, roughly corresponding to 1,000 words or two pages of text. This implies that the proposal is capable of detecting long text. The fixed-width variant is faster than the Lepski-based adaptive variant, suggesting that it can be used as a computationally cheaper alternative when processing speed becomes a priority.

\begin{figure}[t]
\centering
\includegraphics[width=1.0\linewidth]{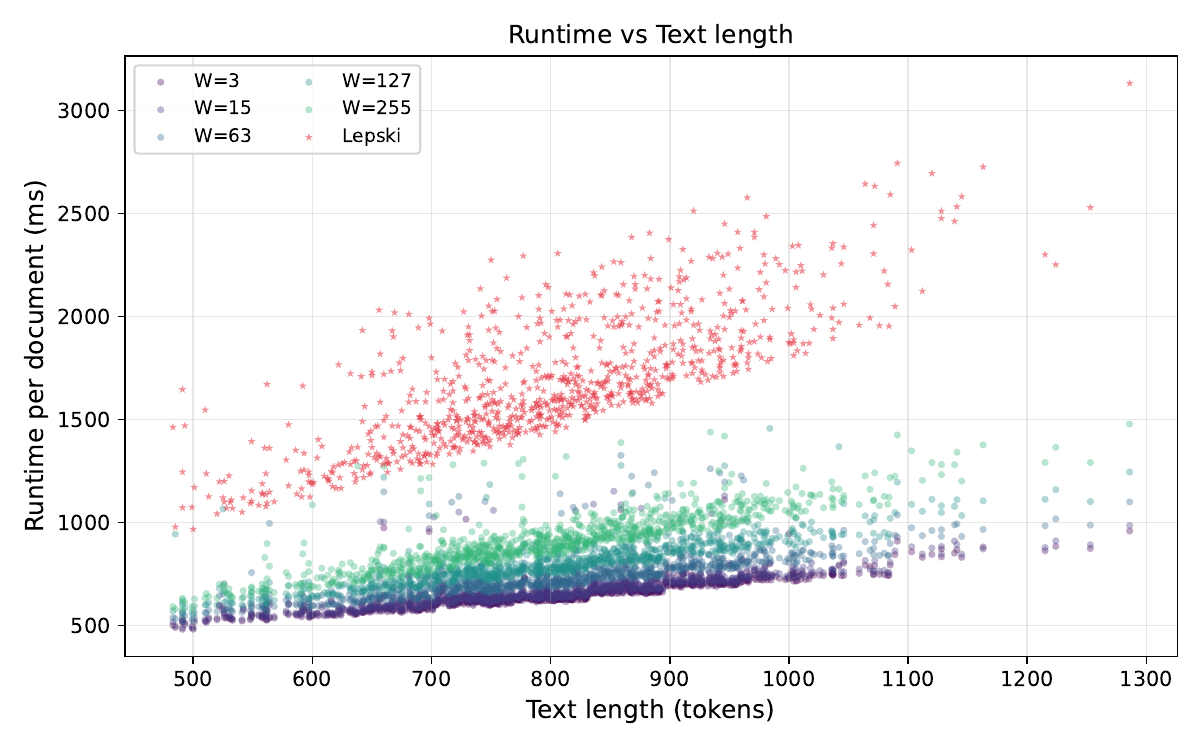}
\vspace{-12pt}
\caption{Runtime per document versus document length. Runtime is measured in milliseconds (ms).}
\label{fig:run_time}
\end{figure}

\section{Case study}\label{sec:case-study}

We deploy the proposed method as a publicly accessible website\footnote{\url{https://huggingface.co/spaces/Jasmineame/detect-LLM-tokens}} that localizes LLM-written tokens in human--LLM coauthored text and reports the estimated proportion of LLM-generated content. The interface is designed to make the token-level decisions returned by our proposal visually inspectable --- it highlights the places that are predicted to be LLM-generated, allowing users to examine where the transition between human and LLM writing occurs. 

Figure~\ref{fig:website_snapshot} shows a snapshot of the interface. To analyze a passage, the user enters text in the input box in the left-hand-side of the webpage, selects either a fixed bandwidth from the slider or the adaptive bandwidth\footnote{Adaptive bandwidth is the default option.}, and chooses the kernel type. The triangular kernel will be used by default, while the interface also allows the user to switch to the uniform kernel. After the user clicks the blue button ``Scan'', the website analyzes the text and returns a token-level visualization of the detection results in the ``Results'' box on the right-hand of the webpage.
\begin{figure}[t]
\centering
\includegraphics[width=1.0\linewidth]{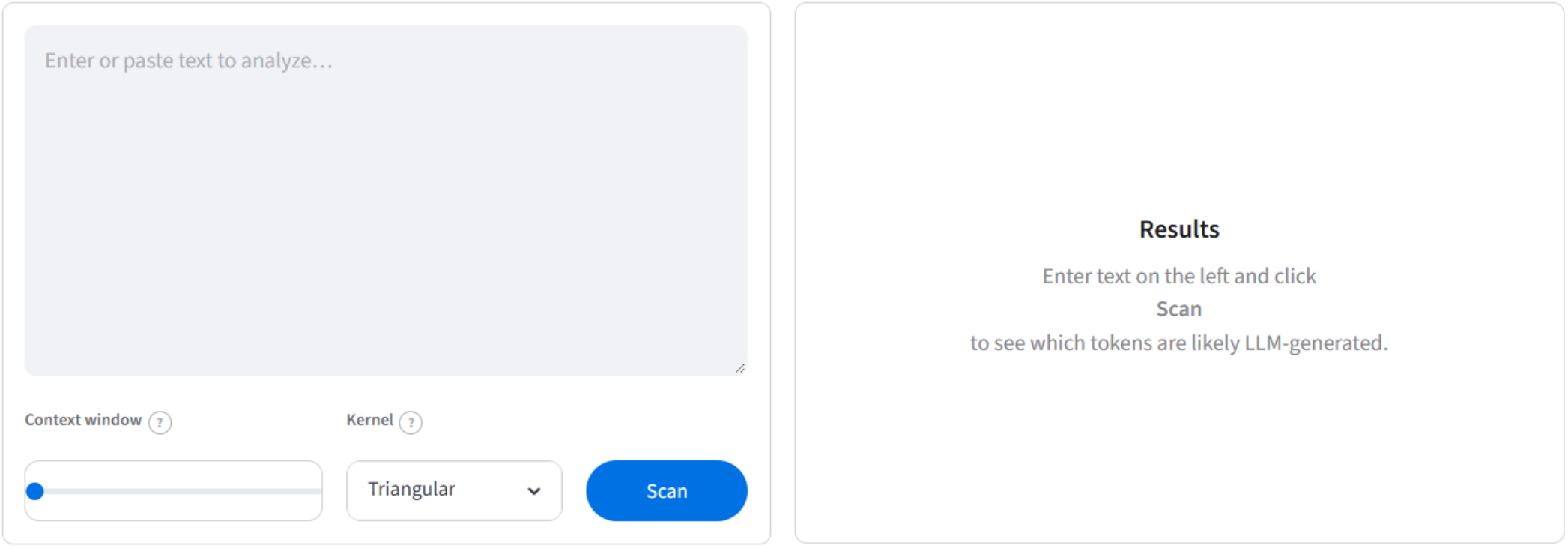}
\caption{A snapshot of the demonstration website.}
\label{fig:website_snapshot}
\end{figure}
\begin{figure}[t]
\centering
\includegraphics[width=1.0\linewidth]{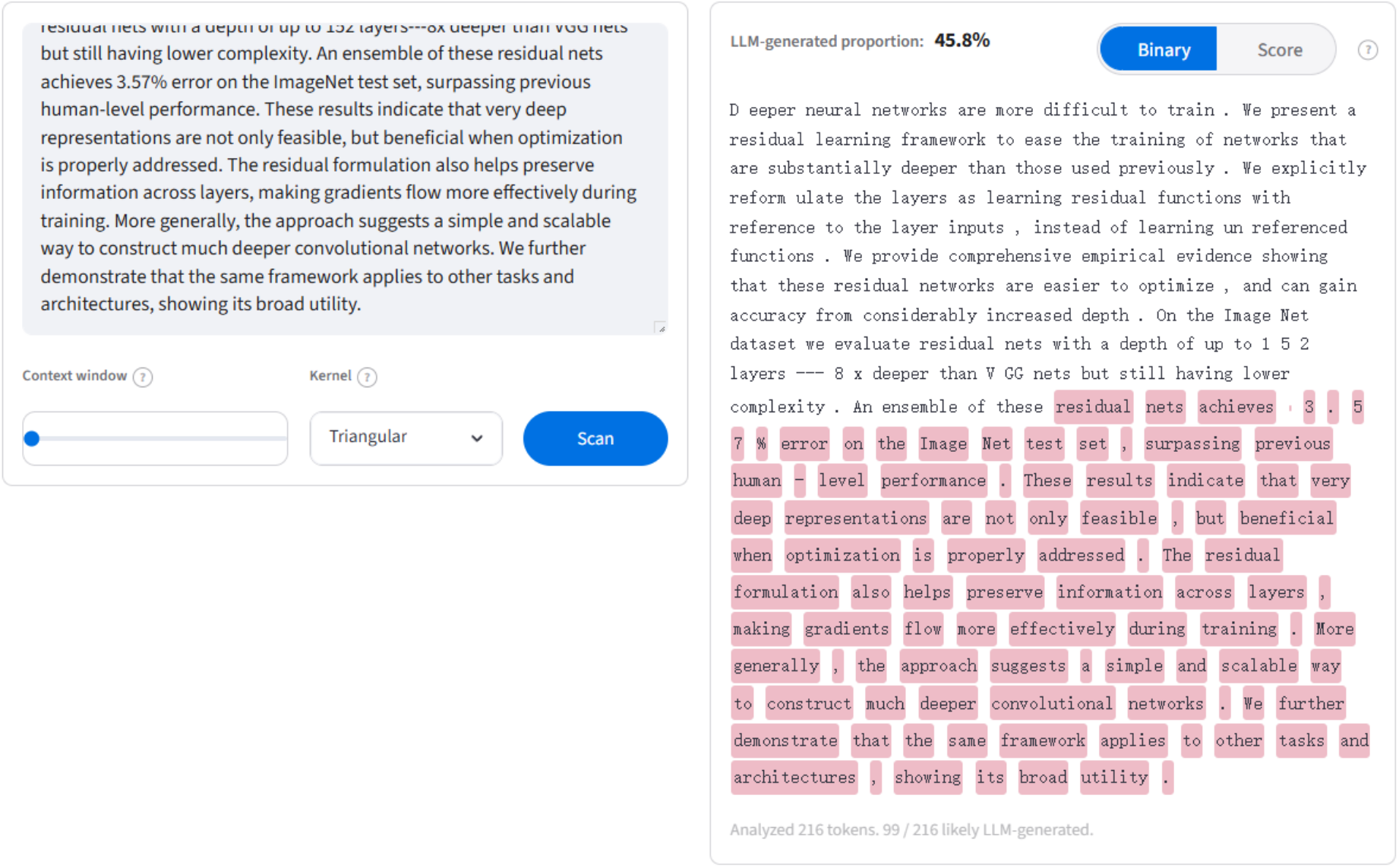}
\caption{The snapshot of the detection results for the human--LLM coauthored abstract based on \citet{he2016deep}.}
\label{fig:instance_rewrite}
\end{figure}

As an illustration, we consider the abstract of the paper ``Deep Residual Learning for Image Recognition'' \citep{he2016deep}, one of the most widely cited papers in computer science with total citation over 300 thousand. We first use this abstract to create a human-AI coauthored passage.
Specifically, we keep the first half of the original tokens in the abstract and ask GPT-5 \citep{openai2025gpt5card} to generate the remaining text. The detailed prompt and the GPT-5's output are provided in Appendix~\ref{sec:case-study-details}. We then input the resulting passage into the input box in the web interface. As shown in Figure~\ref{fig:instance_rewrite}, the highlighted tokens are highly consistent with the LLM-generated pieces. Indeed, 204 out of 216 tokens are correctly classified according to the classification results presented in Figure~\ref{box:case-classification}. While this single case should not be interpreted as a general performance measure, it shows the deployed tool can detect LLM-generated segments in a concrete coauthoring scenario.
\begin{figure}[H]
\begin{tcolorbox}[colback=white, colframe=black, colbacktitle=black, coltitle=white, fonttitle=\bfseries, title=Classification Results on Human and GPT-5 coauthored text]
\correcthuman{Deeper neural networks are more difficult to train. We present a residual learning framework to ease the training of networks that are substantially deeper than those used previously. We explicitly reformulate the layers as learning residual functions with reference to the layer inputs, instead of learning unreferenced functions. We provide comprehensive empirical evidence showing that these residual networks are easier to optimize, and can gain accuracy from considerably increased depth. On the ImageNet dataset we evaluate residual nets with a depth of up to 152 layers---8x deeper than VGG nets but still having lower complexity. An ensemble of these} \humantollm{residual nets achieves 3.57\% error on the} \correctllm{ImageNet test set, surpassing previous human-level performance. These results indicate that very deep representations are not only feasible, but beneficial when optimization is properly addressed. The residual formulation also helps preserve information across layers, making gradients flow more effectively during training. More generally, the approach suggests a simple and scalable way to construct much deeper convolutional networks. We further demonstrate that the same framework applies to other tasks and architectures, showing its broad utility.}
\end{tcolorbox}
\vspace{-12pt}
\caption{Detailed token-level classification for the human--LLM coauthored abstract. The 87 correctly classified LLM-generated tokens are highlighted in \correctllm{cyan}, and the 117 correctly classified human-written tokens are highlighted in \correcthuman{purple}. The 12 human-written tokens incorrectly classified as LLM-generated are highlighted in \humantollm{orange}. }\label{box:case-classification}
\end{figure}

We also consider an all-human control case by applying the detector directly to the original abstract of \citet{he2016deep}. Figure~\ref{fig:instance} shows that the method does not flag any tokens as LLM-generated, which is consistent with the realistic case.
\begin{figure}[t]
\centering
\includegraphics[width=1.0\linewidth]{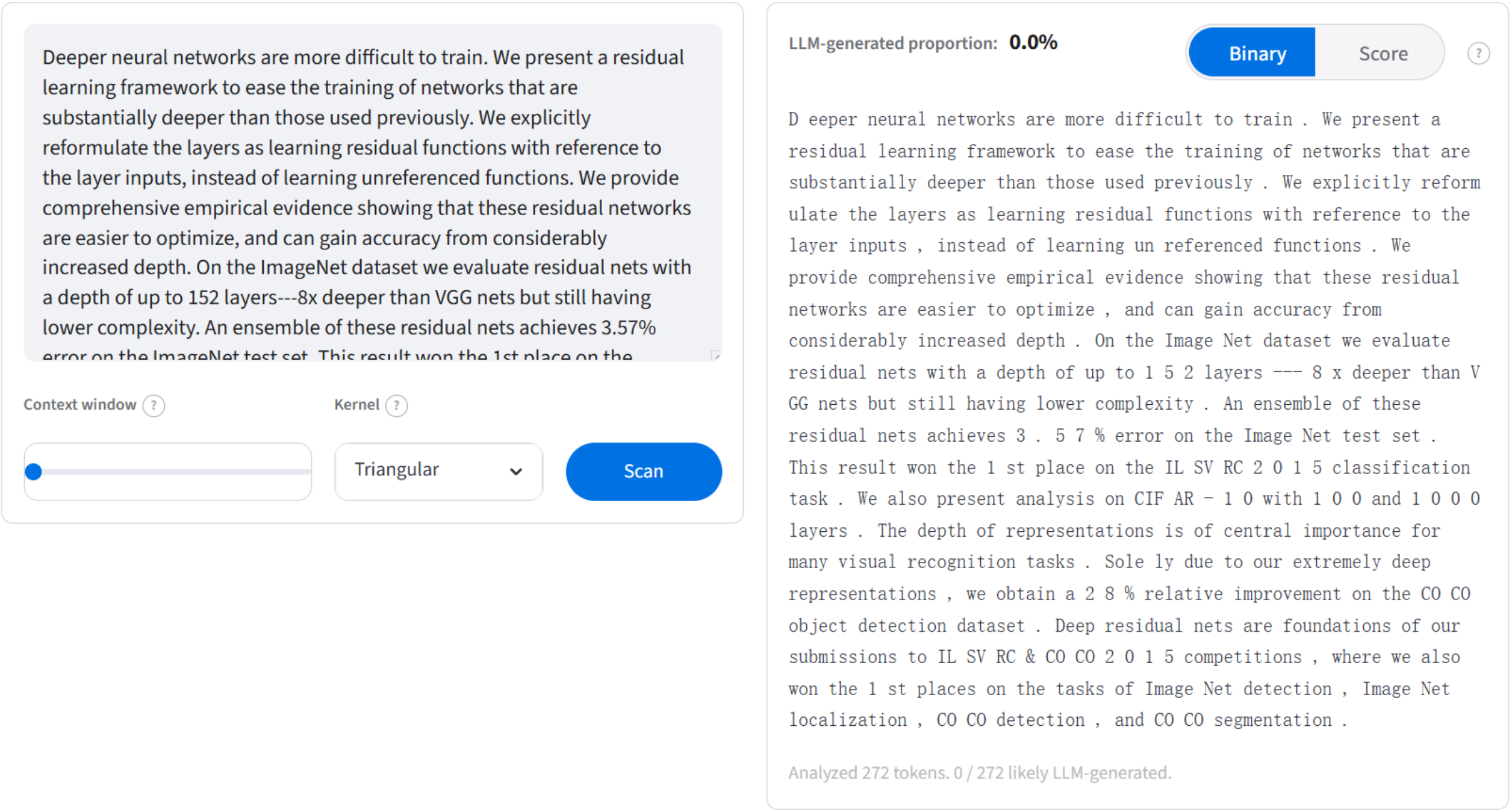}
\caption{The snapshot of the detection results for the original abstract from \citet{he2016deep}.}\label{fig:instance}
\end{figure}

\section{Conclusion}\label{sec:conclusion}

We studied token-level localization of LLM-generated content in human--LLM coauthored text. The proposed framework smooths token-level detection scores over neighboring tokens and uses a Lepski-type rule to adapt the bandwidth to local authorship structure. Our theory explains the bias--variance trade-off created by local aggregation, and our experiments show that adaptive aggregation improves performance across synthetic and realistic mixed-authorship settings. Future work may extend the framework to multilingual text, stronger adversarial editing, and richer forms of human--AI interaction.

\section*{Acknowledgments}
We would like to thank everyone who supported and encouraged us during the preparation of this paper.

Zhu's work was supported by the AIRR Gateway project 2026, ``Efficient Approaches on Detecting Generative AI''. Shi's and Ye's work was supported by the AIRR Gateway Project 2026, ``Reasoning-Enhanced Multimodal AI-Generated Content Detection''. Spill’s work was supported by a UKRI Future Leaders Fellowship (MR/T043571/1).

\newpage

\appendix
\section{Proof of Theorems}\label{sec:proof}

\subsection{Proof of Theorem \ref{thm:mse-general}}

\begin{proof}
Throughout the proof, all expectations, variances, covariances, and correlations are taken conditional on the fixed label sequence $C=(C_1,\ldots,C_L)$. For simplicity, we suppress this conditioning from the notation. Additionally, for a fixed token and a bandwidth , we omit the subscript $t$ and $k$ in $w_{t,l}^{(k)}$ for simplicity and denote $ w_l=w_{t,l}^{(k)}$.
By definition,
\[
\widehat S_{t,k}=\sum_{l\in\textup{Adj}_k(t)} w_l S_l \text{ with } \sum_{l\in \textup{Adj}_k(t)} w_l=1.
\]
According to the bias-variance decomposition, 
\begin{eqnarray}
    \text{MSE}(\widehat S_{t,k}) = \text{Bias}^2(\widehat S_{t,k}) +\text{Var}(\widehat S_{t,k}).
\end{eqnarray}
We first consider the bias term. Notice that
\begin{eqnarray}
    \text{Bias}(\widehat S_{t,k}) =\mathbb{E}[\widehat S_{t,k}]-\mu_{C_t}=\sum_{l\in \textup{Adj}_k(t)} w_l\mathbb{E}[S_l]-\mu_{C_t} = \sum_{l\in \textup{Adj}_k(t)}w_l(\mu_{C_l}-\mu_{C_t}).
\end{eqnarray}
By Assumption~\ref{assump:score-separable}, If $C_l\ne C_t$, then 
$|\mu_{C_l}-\mu_{C_t}|= |\mu_1-\mu_0| = \Delta$. Therefore,
\begin{eqnarray}
    \text{Bias}(\widehat S_{t,k}) = \Delta\cdot
\sum_{l\in \textup{Adj}_k(t):\,C_l\ne C_t}w_l=\Delta\cdot\pi_{t,k}.
\end{eqnarray}
We next bound the variance term. 
By the definition of $\widehat{S}_{t,k}$,
\[
\operatorname{Var}(\widehat{S}_{t,k})=
\operatorname{Var}\left(\sum_{l\in \textup{Adj}_k(t)}w_lS_l\right)
=
\sum_{l,m\in \textup{Adj}_k(t)}
w_lw_m\operatorname{Cov}(S_l,S_m).
\]
For any \(l,m\in \textup{Adj}_k(t)\), the definitions of \(\sigma^2\) and \(\rho(\cdot)\) imply
\[
|\operatorname{Cov}(S_l,S_m)|
=
|\operatorname{Corr}(S_l,S_m)|
\sqrt{\operatorname{Var}(S_l)\operatorname{Var}(S_m)}
\le
\sigma^2\rho(|l-m|).
\]
Combining the fact that the weights $w_l, l\in \textup{Adj}_k(t)$ are nonnegative, we obtain
\[
\operatorname{Var}(\widehat S_{t,k})
\le\sigma^2\sum_{l,m\in \textup{Adj}_k(t)}w_lw_m\rho(|l-m|)\leq \frac{1}{2}\sigma^2\sum_{l,m\in \textup{Adj}_k(t)}(w_l^2+w_m^2)\rho(|l-m|),
\]
where the last inequality follows from elementary inequality 
$2ab\le a^2+b^2$. Rearranging the term on the right hand side, we obtain 
\begin{eqnarray}
    \operatorname{Var}(\widehat S_{t,k})&\leq& \sigma^2\sum_{l\in \textup{Adj}_k(t)}w_l^2\sum_{m\in \textup{Adj}_k(t)}\rho(|l-m|)\nonumber\\
    &\le& \sigma^2 (n_{\mathrm{eff}}(t,k))^{-1} \left(1 + 2\sum_{i=1}^{2k}\rho(i)\right)\nonumber\\
    &\leq&2\kappa\sigma^2 (n_{\mathrm{eff}}(t,k))^{-1}.
\end{eqnarray}

Combining the bias and variance bounds, we obtain
\[
\operatorname{MSE}(\widehat S_{t,k})=
\text{Bias}^2(\widehat S_{t,k}) +\text{Var}(\widehat S_{t,k})
\le
\Delta^2\pi_{t,k}^2
+
\frac{2\kappa\sigma^2}{n_{\mathrm{eff}}(t,k)}.
\]
This finishes the proof.
\end{proof}

\subsection{Proof of Corollary \ref{cor:special-kernel}}
\begin{proof}
According to Theorem~\ref{thm:mse-general}, it remains to bound \(\pi_{t,k}\) and \(n_{\mathrm{eff}}(t,k)\) for each kernel.

\textbf{(i) Uniform kernel.} For the uniform kernel, it is easy to see that $w_{t,l}^{(k)}=\frac{1}{2k+1}$ for all $l\in\operatorname{Adj}_k(t)$.
If \(k\le d_t\), the whole window \(\operatorname{Adj}_k(t)\) is contained in the same constant segment as token \(t\). Hence
$\pi_{t,k}=0$.
If $k>d_t$, then the window crosses the nearer boundary. Since $k<\tau/2$, the window crosses at most one boundary. Thus, the number of tokens beyond that boundary and still inside the window is at most $k-d_t$. In this case,
\[
\pi_{t,k}\le\frac{k-d_t}{2k+1}.
\]
Combining both cases, we obtain
\[
\pi_{t,k}\le\frac{(k-d_t)_+}{2k+1},
\]
where \(x_+=\max\{x,0\}\).
Moreover, direct calculation yields that
\[
\sum_{l\in \operatorname{Adj}_k(t)}\left(w_{t,l}^{(k)}\right)^2
=(2k+1)\left(\frac{1}{2k+1}\right)^2=\frac{1}{2k+1}.
\]
Therefore, 
\[
n_{\mathrm{eff}}(t,k)=\left(\sum_{l\in \operatorname{Adj}_k(t)}\left(w_{t,l}^{(k)}\right)^2\right)^{-1}= 2k+1.
\]
The first conclusion of Corollary \ref{cor:special-kernel} then follows from substituting these two expressions into Theorem~\ref{thm:mse-general}.

\textbf{(ii) Triangular kernel.}
For the discrete triangular kernel, write \(r=l-t\). With some calculation, it is straightforward to show that the normalized weights are
\[
w_{t,t+r}^{(k)}=\frac{k+1-|r|}{(k+1)^2}, \quad r=-k,\ldots,k.
\]
We first bound the contamination weight. If \(k\le d_t\), then by the same argument, $\pi_{t,k}=0$. If \(k>d_t\), similar argument yields that the window $\operatorname{Adj}_k(t)$ crosses at most one boundary. Without loss of generality, suppose the nearest boundary is on the left side of \(t\). Then the contaminated indices correspond to distances $r=d_t+1,\ d_t+2,\ \ldots,\ k$ from $t$ on that side. Their total triangular weight is
\begin{eqnarray}
    \sum_{r=d_t+1}^{k}
\frac{k+1-r}{(k+1)^2} = \frac{1}{(k+1)^2}\sum_{s=1}^{k-d_t}s
=\frac{(k-d_t)(k-d_t+1)}{2(k+1)^2}.\nonumber
\end{eqnarray}
Therefore,
\[
\pi_{t,k}\le\frac{(k-d_t)(k-d_t+1)}{2(k+1)^2}.
\]
Combining the cases \(k\le d_t\) and \(k>d_t\), we get
\[
\pi_{t,k}\le \frac{(k-d_t)_+(k-d_t+1)_+}{2(k+1)^2}.
\]
Next, we compute the effective sample size. Notice that under the triangular kernel, 
\[
n_{\mathrm{eff}}(t,k)^{-1} = \sum_{l\in \operatorname{Adj}_k(t)}
\left(w_{t,l}^{(k)}\right)^2=\sum_{r=-k}^{k}\left(\frac{k+1-|r|}{(k+1)^2}\right)^2.
\]
Using the fact that
\[
\sum_{r=1}^{k}(k+1-r)^2=\sum_{s=1}^{k}s^2=\frac{k(k+1)(2k+1)}{6}.
\]
It follows that
\[
\sum_{r=-k}^{k}(k+1-|r|)^2=(k+1)^2+\frac{k(k+1)(2k+1)}{3}.
\]
Thus,
\[
n_{\mathrm{eff}}(t,k)^{-1} = \frac{2(k+1)^2+1}{3(k+1)^3}.
\]
The second conclusion of Corollary \ref{cor:special-kernel} then follows from substituting these two expressions into Theorem~\ref{thm:mse-general}.
\end{proof}

\subsection{Proof of Theorem \ref{thm:oracle-bound}}
\begin{proof}
We prove the result by verifying the assumptions required for
Corollary 4 of \citet{su2020adaptive}. For a fixed $t$-th token and a fixed kernel function $K$, define 
\begin{eqnarray}
    \mathfrak{B}_{t,k_i} := \Delta \pi_{t,k_i} = \Delta \sum_{l\in\text{Adj}_k(t):C_l\neq C_t}w_{t,l}^{(k_i)}.
\end{eqnarray}
To invoke Corollary 4 in \citet{su2020adaptive}, we need to verify that
\begin{itemize}[leftmargin=*]
    \item[(i)] $|\mathbb{E}[\widehat{S}_{t,k_i}]-\mu_{C_t}| \leq \mathfrak{B}_{t,k_i}$ for all $i$.
    \item[(ii)] With probability at least $1-\delta$, $|\widehat{S}_{t,k_i} - \mathbb{E}[\widehat{S}_{t,k_i}]| \leq\mathfrak{r}_{t,k_i}$ for all $i$.
    \item[(iii)] $\mathfrak{B}_{t,i}\leq\mathfrak{B}_{t,i+1}$ and $\kappa \mathfrak{r}_{t,i}\leq\mathfrak{r}_{t,i+1} \leq \mathfrak{r}_{t,i}$ for some constant $\kappa>0$ and for all $i$.
\end{itemize}

Condition (i) directly follows from a similar argument as deriving the bias in Theorem \ref{thm:mse-general}.

For condition (ii), define 
\[
Z_{t,k_i}:=\widehat S_{t,k_i}-\mathbb E[\widehat S_{t,k_i}]=
\sum_{l\in\operatorname{Adj}_{k_i}(t)}w_{t,l}^{(k_i)}\varepsilon_l.
\]
Under the assumption that the token scores are conditionally
independent and bounded by $\xi$,  we have $w_{t,l}^{(k_i)}\varepsilon_l\in\left[-\xi w_{t,l}^{(k_i)},\xi w_{t,l}^{(k_i)}\right]$ almost surely. Applying Hoeffding's inequality yields
\[
\mathbb P\left(|Z_{t,k_i}|\ge x\right)
\le2\exp\left(-\frac{x^2}{2\xi^2\sum_{l\in\operatorname{Adj}_{k_i}(t)}\left(w_{t,l}^{(k_i)}\right)^2}\right)= 2\exp\left(-\frac{x^2}{2\xi^2 n_{\mathrm{eff}}^{-1}(t,k_i)}\right).
\]
Now take
\[
\mathfrak r_{t,k_i}:=\left\{2\xi^2 n_{\mathrm{eff}}^{-1}(t,k_i)\log(2M/\delta)\right\}^{1/2}.
\]
Then
\[
\mathbb P\left(|Z_{t,k_i}|\ge \mathfrak r_{t,k_i}\right)\le2\exp\{-\log(2M/\delta)\}=\frac{\delta}{M}.
\]
Taking a union bound over \(i=1,\ldots,M\), we have
\[
\mathbb P\left(\forall i\in\{1,\ldots,M\},
\quad\left|\widehat S_{t,k_i}
-\mathbb E[\widehat S_{t,k_i}]\right|\le\mathfrak r_{t,k_i}\right)
\ge1-\delta.
\]
This verifies condition (ii).

For condition (iii), the monotonicity of the bias term directly follows from the proof of Corollary \ref{cor:special-kernel}. Moreover,
notice that the radius
\[
\mathfrak r_{t,k_i}=\sqrt{2}\xi\left\{n_{\mathrm{eff}}^{-1}(t,k_i)
\log(2M/\delta)\right\}^{1/2}.
\]
Following the same line as the proof in Corollary \ref{cor:special-kernel}, we obtain that $n_{\text{eff}}(t,k)$ is nonincreasing with respect to $k$ for both uniform and triangular kernel, thus $\mathfrak r_{t,k_{i+1}}\le \mathfrak r_{t,k_{i}}$. Meanwhile, based on the exact form of $n_{\text{eff}}(t,k)$ of uniform and triangular kernel and the geometric property of the grid (i.e. $k_{i+1}  = 2 k_i$), we obtain $3n_{\text{eff}}(t,k_{i+1})\ge n_{\text{eff}}(t,k_{i})$ for both kernels. Thus condition (iii) holds when we set $\kappa$ to $\kappa_0 = 1/3$.

With all the prerequisite verified, applying Corollary~4 of \citet{su2020adaptive} with $\widehat\theta_i=\widehat S_{t,k_i}, \theta^\star=\mu_{C_t}, B(i)=\mathfrak{B}_{t,k_i}$, $\operatorname{CNF}(i;\delta)=\mathfrak r_{t,k_i}$ and notice that $|S_t| \leq \xi+\max\{|\mu_0|,|\mu_1|\} =: R_S$, we obtain for a universal constant $C>0$,
\[
\mathbb E\left[\left(\widehat S_{t,k_{\widehat i_t}}-\mu_{C_t}\right)^2\right]
\le\frac{C}{\kappa_0^2}\min_{1\le i\le M}
\left[\mathfrak B_{t,k_i}^{\,2}+\mathfrak r_{t,k_i}^{\,2}\right]+ 4R_S^2\delta.
\]
Denote $C_{\text{Lep}} = 2C/\kappa_0^2$ and plug in the exact form of $\mathfrak B_{t,k_i}$ and $\mathfrak r_{t,k_i}$, we get
\[
\mathbb E\left[\left(\widehat S_{t,k_{\widehat i_t}}-\mu_{C_t}\right)^2\right]
\le\frac{C}{\kappa_0^2}\min_{1\le i\le M}\left[\Delta^2\left(\max_{1\le j\le i}\pi_{t,k_j}\right)^2
+\frac{\xi^2\log(2M/\delta)}{n_{\mathrm{eff}}(t,k_i)}\right]
+ 4R_S^2\delta.
\]
This proves the theorem.
\end{proof}

\section{Details on Implementation and Experiments}\label{app:implementation}

\subsection{Implementation Details on Adaptive Bandwidth Selection}
Recall that, for each candidate bandwidth, we compute a score estimator
$\widehat{S}_{t,k_i}$ and construct an interval
\[
I_t(i)=\left[\widehat S_{t,k_i}-2\widehat{\mathfrak r}_{t,k_i}(\delta),\,
\widehat S_{t,k_i}+2\widehat{\mathfrak r}_{t,k_i}(\delta)\right],
\]
where the interval radius $\widehat{\mathfrak r}_{t,k_i}(\delta)$ can be chosen as
\[
\widehat{\mathfrak r}_{t,k_i}(\delta)=\widehat{\xi}
\sqrt{\log(2M/\delta)\sum_{l\in \operatorname{Adj}_{k_i}(t)}\left(w_{t,l}^{(k_i)}\right)^2
}.
\]
Therefore, the Lepski rule can be written equivalently as
\begin{align*}
\max_{i\in\{1,\ldots,M\}}\Bigg\{i:
\bigcap_{j=1}^{i}
\Bigg[
&\frac{\widehat S_{t,k_i}}{\widehat\xi}
-2\sqrt{\log(2M/\delta)
\sum_{l\in \operatorname{Adj}_{k_i}(t)}\left(w_{t,l}^{(k_i)}\right)^2},\\
&\frac{\widehat S_{t,k_i}}{\widehat\xi}
+2\sqrt{\log(2M/\delta)
\sum_{l\in \operatorname{Adj}_{k_i}(t)}\left(w_{t,l}^{(k_i)}\right)^2}
\Bigg]\ne \emptyset\Bigg\}.
\end{align*}
In our implementation, the raw score for the $t$-th token can be instantiated as
\[
\widehat{S}_t = \log p_\theta(X_t\mid \bm{X}_{<t}).
\]
Therefore, a natural variance estimator for the raw token score is
\[
\mathbb{V}_{\widetilde{X}_t\sim s_\theta(\cdot\mid \bm{X}_{<t})}
\left[\log p_\theta(\widetilde{X}_t\mid \bm{X}_{<t})\right].
\]
To stabilize the variance estimator, we aggregate the variance estimates over the neighborhood $\textup{Adj}_k(t)$. Thus, the local variance estimator can be written as
\begin{equation*}
    \widehat{\xi}_{t,k_i}^2=
    \frac{\sum_{l\in\textup{Adj}_{k_i}(t)}\left(w_{t,l}^{(k_i)}\right)^2
    \mathbb{V}_{\widetilde{X}_l\sim s_\theta(\cdot\mid X_{<l})}
    \left[\log p_\theta(\widetilde{X}_l\mid X_{<l})\right]}
    {\sum_{l\in\textup{Adj}_{k_i}(t)}\left(w_{t,l}^{(k_i)}\right)^2}.
\end{equation*}

\subsection{Experiments: Details}\label{sec:experiments-details}
\paragraph{Prompt template.} We use the following template to generate LLM text in all experiments. For different authorship-transition patterns, we only adjust \texttt{input\_text} and \texttt{target\_sentence\_count}.
\begin{center}
\begin{tcolorbox}[colback=white, colframe=black, colbacktitle=black, coltitle=white, fonttitle=\bfseries, title=Prompt template]
[System]

None

[User]

Based on the given passage, write a natural continuation. The continuation should be coherent with the passage.

A continuation of about \{target\_sentence\_count\} sentences is preferred, but it may be slightly longer if needed for coherence.

Do not add explanation, commentary, numbering, or any prefix. Only output the continuation itself.

Passage: \{input\_text\}

Continuation:
\end{tcolorbox}
\end{center}

\paragraph{Data construction: truncating overly long documents.} When constructing the synthetic datasets, we truncate overly long passage to let it have at most 32 sentences. This truncation serves two purposes: (i) it reduces the computational cost of synthetic data generation and (ii) it facilitates fair comparison with DeBERTa, Segformer and TriBERT, whose maximum input lengths differ because of tokenizer constraints. Under this setting, performance differences among detectors remain observable.

\paragraph{Model details.}
We use six LLMs to generate human-AI coauthored text. Ranging from open-sourced Gemma-2B-Instruct model and close-sourced models such as Grok-4.1-Fast-non-reasoning, Claude-4.5-haiku, Gemini-3.1-Flash-Lite, GPT-5 and GPT-5-nano. We deploy Gemma-2B-Instruct locally, while access the remaining models through their APIs. For the locally deployed Gemma model, we use greedy decoding with a fixed random seed (\texttt{seed=0}). For API-based models, we retain the default generation parameters. In the sensitivity analysis, we only vary the decoding temperature over \(\{0.3, 0.6, 0.9\}\) while keeping all other generation parameters fixed.

\paragraph{Baselines: implementation details.} We summarize the training procedure for each baseline below. For all ML-based methods, we construct an additional dataset $\mathcal{D}$ using the same procedure as in Section~\ref{sec:compare-baselines}, with three changes: (i) $q$ is fixed at 1; (ii) the human text is sourced from the Yelp dataset \citep{zhangCharacterlevelConvolutionalNetworks2015}; and (iii) the LLM-generated tokens are produced by GPT-5-nano. This setup mimics the realistic cases in which the training distribution differs from the test distribution. For the baseline methods that do not implement training/validation pipeline, we use an 80/20 train--validation split.

We note that when the authors of these baselines publicly release the training code, we use the official implementation for training whenever possible. Unless otherwise specified, all ML-based methods are optimized with AdamW using linear learning-rate decay and zero weight decay. For the method that do not specify learning rate and batch size, we set the learning rate to 5e-5 and the batch size to 2. 
\begin{enumerate}[leftmargin=*]
\item \textbf{DeBERTa.} We train DeBERTa on $\mathcal{D}$ and set the training hyperparameters following \citet{su2025haco}. For long documents, due to the maximum input constraint (i.e., 512 tokens) in DeBERTa's implementation, we split the input into overlapping windows of at most 512 tokens, with an overlap of 128 tokens between adjacent windows.
\item \textbf{SegFormer.} SegFormer is a ML-based method. We use the training implementation released by the authors (\url{https://github.com/douglashiwo/AISentenceDetection}) and train the model on our dataset. For most of the remaining hyperparameters in their implementation, we use the default values. 
\item \textbf{TriBERT.} We train TriBERT following the procedure described in \citet{zeng2024towards} and use the official implementation released at \url{https://github.com/douglashiwo/BoundaryDetection}. We adopt all of the their hyperparameter setting in their implementation such as a proportion of the training/validation set. 
\item \textbf{SenPred.} Following the pipeline of \cite{kushnareva2024boundary} provided at \url{https://github.com/SilverSolver/ai_boundary_detection}, SenPred is trained and valided on $\mathcal{D}$ using sentence-level feature vectors extracted from the text. Each sentence is quantitatively represented by the mean and standard deviation of its token-level scores, which are used as features in logistic regression.
\item \textbf{TextTiling.} TextTiling is a training-free segmentation method, so no supervised training is required. We use the \texttt{TextTilingTokenizer} function in the \texttt{nltk} Python library \citep{bird2009natural}. The only tuning parameter, the block size, is set to its default value of 20.
\item \textbf{PaLD.} We implement the greedy algorithm of PaLD introduced in \citet{lei2025pald}, whose time complexity grows quadratically with respect to the number of sentences. 
\end{enumerate}

\paragraph{Hardware setting.} All experiments are run on a platform with a Tesla V100 GPU with 32GB of memory.

\subsection{Case studies: Details}\label{sec:case-study-details}
The prompt used to ask GPT-5 to continue the abstract of \citet{he2016deep} is shown in Figure~\ref{fig:prompt-gpt5} and the corresponding output is provided in Figure~\ref{fig:response-gpt5}.
\begin{figure}[t]
\begin{tcolorbox}[colback=white, colframe=black, colbacktitle=black, coltitle=white, fonttitle=\bfseries, title=Prompt to GPT-5]
Based on the given passage, write a natural continuation. The continuation should be coherent with the passage. A continuation of about 6 sentences is preferred, but it may be slightly longer if needed for coherence. Do not add explanation, commentary, numbering, or any prefix. Only output the continuation itself.

Passage: Deeper neural networks are more difficult to train. We present a residual learning framework to ease the training of networks that are substantially deeper than those used previously. We explicitly reformulate the layers as learning residual functions with reference to the layer inputs, instead of learning unreferenced functions. We provide comprehensive empirical evidence showing that these residual networks are easier to optimize, and can gain accuracy from considerably increased depth. On the ImageNet dataset we evaluate residual nets with a depth of up to 152 layers---8x deeper than VGG nets but still having lower complexity. An ensemble of these residual nets achieves 3.57\% error on the

Continuation:
\end{tcolorbox}
\vspace*{-12pt}
\caption{}\label{fig:prompt-gpt5}
\end{figure}

\begin{figure}[t]
\begin{tcolorbox}[colback=white, colframe=black, colbacktitle=black, coltitle=white, fonttitle=\bfseries, title=Response of GPT-5]
ImageNet test set, surpassing previous human-level performance. These results indicate that very deep representations are not only feasible, but beneficial when optimization is properly addressed. The residual formulation also helps preserve information across layers, making gradients flow more effectively during training. More generally, the approach suggests a simple and scalable way to construct much deeper convolutional networks. We further demonstrate that the same framework applies to other tasks and architectures, showing its broad utility.
\end{tcolorbox}
\vspace*{-12pt}
\caption{}\label{fig:response-gpt5}
\end{figure}

\bibliographystyle{unsrtnat}
\bibliography{reference}

\end{document}